%% file: arxiv_main.tex
\pdfoutput=1
\PassOptionsToPackage{dvipsnames}{xcolor}
\documentclass{article} %
\usepackage{iclr2027_conference,times}
\usepackage{geometry}
\input{math_commands.tex}

\usepackage[utf8]{inputenc}
\usepackage[T1]{fontenc}
\usepackage{hyperref}
\usepackage{url}
\usepackage{amssymb}
\usepackage{graphicx}
\usepackage[dvipsnames]{xcolor}
\usepackage[most]{tcolorbox}
\usepackage{booktabs}
\usepackage{multirow}
\usepackage{subcaption}
\usepackage{amsthm}
\usepackage{arydshln}
\newtheorem{theorem}{Theorem}

\definecolor{PromptGreen}{HTML}{1A531A}
\definecolor{PromptGreenBg}{HTML}{F0F9F0}

\title{Self-Evolving Agents via Likelihood-Guided Tool-Space Optimization}

\author{Xuanqi Zhang$^{1,2}$, Ruinan Jin$^{1,2}$, Running Yang$^{1}$,  Yuxuan Zhang$^{1}$, Minghui Chen$^{3}$, \\ \textbf{Wenlong Deng}$^{1,\star,\dag}$, \textbf{Xiaoxiao Li}$^{1,2,\dag}$
 \vspace{8pt}
 \\
 $^1$University of British Columbia,
 $^2$Vector Institute,
 $^3$Columbia University
\vspace{3pt} \\
$^\dag$Corresponding author, $^\star$ Project Leader
}

\iclrfinalcopy

\usepackage{caption}
\usepackage[shortlabels]{enumitem}
\usepackage{wrapfig}
\newsavebox{\fitbox}
\newcommand{\fitwidth}[1]{\sbox{\fitbox}{#1}\ifdim\wd\fitbox>\linewidth\resizebox{\linewidth}{!}{\usebox{\fitbox}}\else\usebox{\fitbox}\fi}
\usepackage{placeins}
\usepackage{float}

\newcommand{\ours}{$\texttt{LOTS}$}
\begin{document}
\addtocontents{toc}{\protect\setcounter{tocdepth}{-1}}

\maketitle

\begin{abstract}
Self-evolving agents can continually improve their behavior, while tools define the executable action space through which they interact with the environment. However, exposing the full tool library to a model introduces substantial irrelevant context and can impair tool-use decisions.
We study tool-space self-evolution, where each recurring task type maintains a persistent tool space which is constructed from accumulated output experience. We identify three limitations of existing methods:
(1) \emph{output-unaware selection}: they rely primarily on tool descriptions or model priors rather than observed tool outputs;
(2) \emph{statelessness across requests}: they select tools independently for each request without consolidating prior output experience into persistent task-specific state;
(3) \emph{inference cost}: they repeatedly search, rank, or reason over candidate tools for subsequent requests of the same task.
We address these limitations through output-aware tool scoring, persistent task-specific tool spaces, amortized tool selection, and reusable configurations across models.
We introduce \ours{} (\emph{Likelihood-Only Tool Scoring}), which evolves an agent's tool space from accumulated output experience while keeping model parameters fixed.
After each request, \ours{} holds the model's generated answer and estimates each tool's contribution by measuring how much the answer likelihood changes when its observed output is removed.
These contributions are aggregated within each recurring task to rank tools and update its persistent space.
Across three benchmarks, \ours{} improves task performance while substantially reducing tool context.
More importantly, sequential experiments demonstrate that task-specific spaces persist and continue to improve over time, while cross-model experiments show that learned configurations transfer across different models.
\end{abstract}

\section{Introduction}
\vspace{-1mm}
\label{sec:intro}
Recent advances in large language models (LLMs) have enabled increasingly
capable autonomous agents that can reason~\citep{hurst2024gpt}, invoke external
tools~\citep{deng2025group}, and operate in complex environments
~\citep{claude_toolsearch}. As these systems become more autonomous, an important capability is to improve from accumulated interaction experience
rather than treating every request independently~\citep{selfevolving2025}.
Such \emph{self-evolution} involves persistent updates to model parameters~\citep{zhai2025agentevolver,toolr0,searl,evotool},
contextual state such as prompts and memories~\citep{ace},
tools and reusable skills~\citep{voyager,skillopt},
or control architectures~\citep{selfevolving2025,selfimprove2026}.

Among these components, the \emph{tool space} is particularly important yet under-explored. It determines the agent's executable action space, while tool descriptions and schemas also constitute part of the context processed by the model at inference time. Unlike experience that can be stored and selectively retrieved when relevant, tool specifications are typically authored in advance and exposed to the model at request time~\citep{toolformer,gorilla,qin2024toolllm}. Consequently, as the global tool library expands, exposing it in full imposes a recurring cost: irrelevant tools occupy context even when they are not needed and enlarge the set of alternatives among which the model must choose. Prior work has shown that irrelevant or increasingly long context can impair language models' ability to identify and use relevant information reliably~\citep{lostinmiddle}. This creates a structural mismatch: the global tool library may continually accumulate capabilities, whereas each recurring task typically requires only a small, task-specific subset. As shown in Figure~\ref{fig:teaser}, exposing an expanding global tool space in full increases context overhead and reduces accuracy, even when task requirements remain unchanged. In contrast, maintaining a task-specific tool space substantially reduces context overhead and preserves higher accuracy as the global tool space grows. Additional results in Appendix~\ref{app:registry} (Figure~\ref{fig:tgb-scale}) further support this finding.  

Existing work on tool-oriented self-evolution has largely focused on
creating new tools~\citep{evosop,smith} or adapting their documentation,
while comparatively little attention has been paid to evolving
task-specific tool spaces from execution
experience~\citep{selfevolving2025}. 
More related works are in Appendix~\ref{sec:related}.
A natural starting point for constructing such a space is tool selection:
determining which tools to retain from a candidate set.
Existing approaches broadly follow two strategies, \emph{Search-based} and
\emph{LLM-based methods} ~\citep{gorilla,qin2024toolllm,colt,hyset,toolrerank,anytool,tool2vec,mcpzero}.
\emph{Search-based methods} retrieve tools by matching requests to tool
descriptions~\citep{qin2024toolllm}, but semantic relevance alone does not
capture how effectively the serving model can use each tool.
\emph{LLM-based methods} use a language model to reason over candidate
tools~\citep{kachuee2025querygen}, or train it to select and reconfigure
tools during execution~\citep{autotool, zhou2026toolself}.
These methods draw on the selector's understanding of tool functionality or model prior.
Our focus is complementary: how can execution experience refine a persistent, task-specific tool space in an output-aware manner? To our knowledge, TTO~\citep{octotools} is the only prior method using output accuracy for tool selection, but it requires ground-truth labels and costly greedy search over tool compositions (as shown in Fig.~\ref{fig:cost}).
In summary, prior methods suffer from one or more of three key limitations:
(1) \emph{output-unaware}:
Selection is primarily based on tool descriptions or model priors before the
tool has run, and therefore does not directly measure whether the tool's
observed output actually supported the resulting answer.
(2) \emph{statelessness across requests}:
A tool-selection decision is typically made independently for each request.
Output experience from previous requests of the same recurring task is not
consolidated into persistent state that can shape how later requests are
served.
(3) \emph{costly at inference time}:
As the global registry (tool space) expands, tools must be repeatedly searched, ranked, or
reasoned over for each new request, even when the deployed agent has already seen
many requests from the same task.
This motivates our central question:
\begin{center}
\begin{tcolorbox}[
reset,
colback=gray!5,
colframe=black!60,
boxrule=0.5pt,
toprule=0.5pt,
bottomrule=0.5pt,
leftrule=0.5pt,
rightrule=0.5pt,
sharp corners,
left=8pt,
right=8pt,
top=4pt,
bottom=4pt,
width=0.85\textwidth
]
\centering
\emph{Can an agent evolve a persistent tool space from its own output
experience, so that later requests are served more accurately and efficiently?}
\end{tcolorbox}
\end{center}
\begin{figure}[t]
\centering
\includegraphics[width=\linewidth]{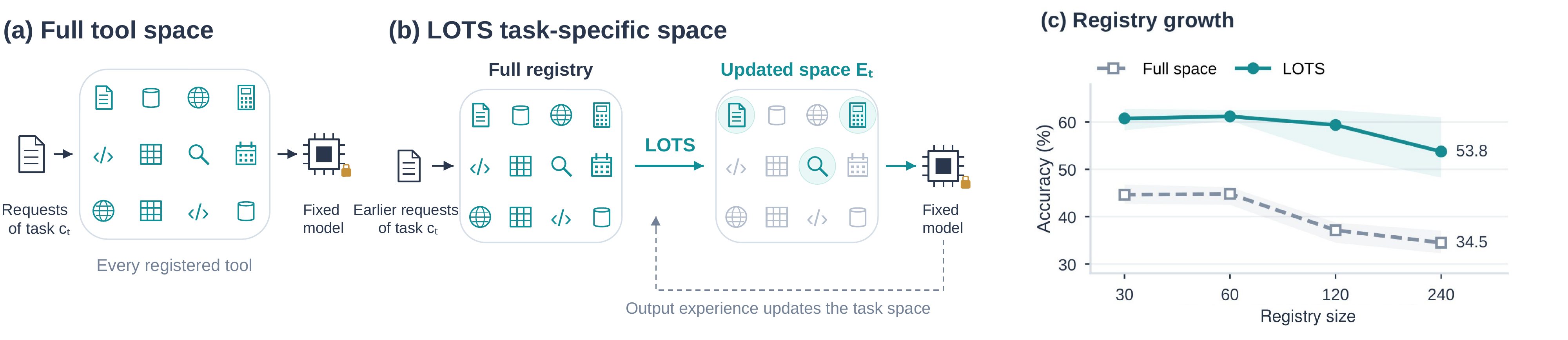}
\vspace{-4mm}
\caption{\textbf{A task-specific tool space that evolves from the agent's own outputs.}
\textbf{(a)} A fixed model is served full registry on every request of a task.
\textbf{(b)} \textit{Likelihood-Only Tool Scoring} (\ours{}) turns the output experience of a task's earlier requests into a compact space $E_t$. Later requests of the task reuse it, and their outputs update it.
\textbf{(c)} Accuracy as the registry grows from 30 to 240 tools (TGB, Qwen3.5-9B): serving every tool falls to 34.5\%, while the \ours{} space stays at 53.8\% or above (Appendix~\ref{app:registry}).}
\label{fig:teaser}
\vspace{-4mm}
\end{figure}

To answer this question, we introduce \emph{Likelihood-Only Tool Scoring} (\ours{}), which evolves persistent task-specific tool spaces from output experience. \ours{} first makes tool optimization \emph{output-aware}: it freezes the model's realized output and measures its likelihood change under a benchmark-specific intervention
that removes evidence associated with each tool. It then aggregates these request-level contributions across recurring requests of the same task, ranks the tools by their accumulated evidence, and converts the ranking into a persistent task-specific space through tool pruning. This persistent space is reused to serve subsequent requests, whose new output traces can in turn revise the configuration. As a result, output evidence is retained across interactions and missing tools can be captured in the successive turns. We evaluate this design on TGB, BFCL, and GTA-Atomic: on TGB, a controlled compositional benchmark, \ours{} keeps 2.8 to 6.8 of 15 tools on average and improves all eight evaluated models over serving every tool, by 3.8 to 25.0 accuracy points; on BFCL, \ours{} improves all five models; and on GTA-Atomic, tool space fitted on earlier requests improves six of seven models when later requests are executed by a ReAct agent with executable tools. The learned spaces also remain revisable as new traces arrive and can be reused across models.
Our contributions are:
\vspace{1mm}
\\
\noindent$\bullet$ \textbf{Output-aware tool optimization.}
\ours{} values tools from their observed outputs, using the likelihood
change of the model's realized answer under leave-one-out intervention.
This moves beyond description- or prior-based selection and yields tool spaces
that generally improve over the full registry across the evaluated benchmarks and models.
\vspace{1mm}
\\
\noindent$\bullet$ \textbf{Persistent task-specific evolution.}
Rather than selecting tools independently for each request, \ours{} aggregates
output evidence across recurring requests into a persistent task-specific
space and is reused by later requests and revised as new traces arrive.
\vspace{1mm}
\\
\noindent$\bullet$ \textbf{Reduced serving context.}
\ours{} reuses compact task-specific spaces instead of searching or reasoning over the full registry for every request, reducing recurring tool context while maintaining or improving performance; Appendix~\ref{app:efficiency} reports the one-time construction cost and the number of requests needed to recover it.
\vspace{1mm}
\\
\noindent$\bullet$ \textbf{Tool-space reuse across models.}
As an additional benefit, the optimized tool spaces learned from one model can often
be reused by models from other scales and families.
\vspace{-1mm}
\section{Problem Formulation}
\vspace{-1mm}
\label{sec:problem}
\textbf{Tool-use interaction.}
Let $\mathcal{S}$ denote the global tool registry, and
$E \subseteq \mathcal{S}$ is the tool set available to the agent.
Given a request $x$, the initial context $h_0$ contains the request and the specifications of tools in $E$.
At each interaction step \(\ell\), the model generates an action
$a_\ell \sim p_\theta(\cdot \mid h_{\ell-1}),$
where \(a_\ell\) is either a terminal response or a tool call \((s_\ell,z_\ell)\) specifying a tool \(s_\ell\in E\) and its arguments \(z_\ell\).
A tool call produces an observation \(o_\ell=s_\ell(z_\ell)\), which is appended to the context: $
h_\ell=h_{\ell-1}\oplus(a_\ell,o_\ell).$ 
The agent continues this action--observation process until it produces the final answer \(y_\theta(x;E)\). The tool space therefore determines both the agent's executable action space and the tool information available during inference. 
\\
\textbf{Continual tool-space evolution.}
We consider a fixed language-model agent \(p_\theta\) deployed with the global tool registry \(\mathcal{S}\). The agent continually self-evolves as a sequence of tasks \(c_1,\ldots,c_T\) arrives: for each incoming task \(c_t\), it optimizes a task-specific tool space \(E_t\subseteq\mathcal{S}\), whose optimum is
\[
E_t^\star=\arg\max_{E\subseteq\mathcal{S}}\ \mathcal{U}(c_t;p_\theta,E),
\]
where \(\mathcal{U}\) measures task performance. As new tasks arrive, the agent continually expands its collection of task-specific spaces \(\{E_1,\ldots,E_T\}\), and a task's space can be revised as more of its requests are served, improving performance through tool-space optimization while keeping \(p_\theta\) fixed.

\vspace{-1mm}
\section{Method}
\label{sec:method}
\vspace{-1mm}
\begin{figure}[t]
\centering
\includegraphics[width=0.95\linewidth]{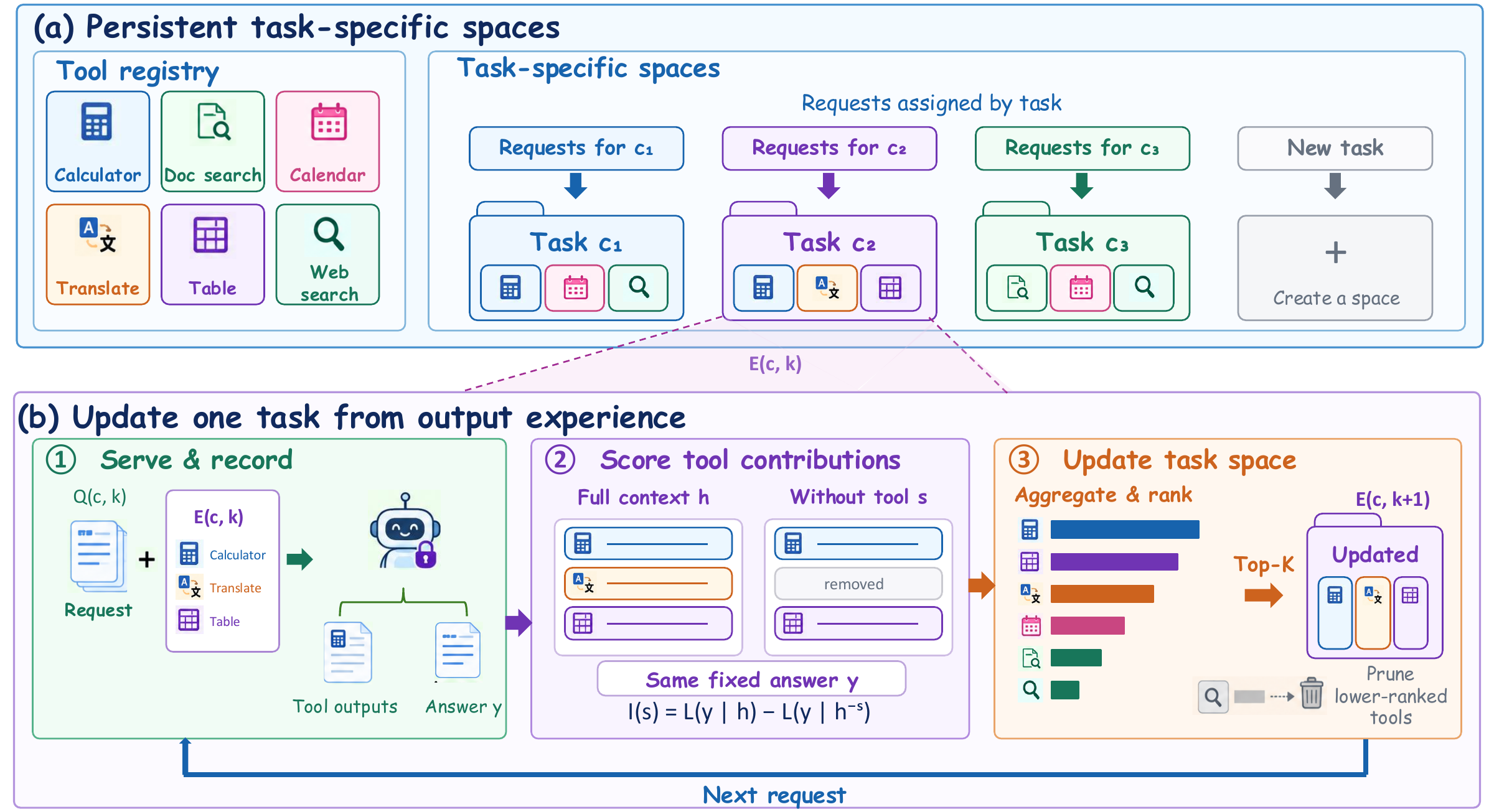}
\caption{
\textbf{\ours{} evolves persistent task-specific tool spaces from output
traces.}
\textbf{(a)} Requests are assigned to recurring task types, each of which
maintains its own space $E_{c,k}$. Requests of task $c$ collected for an
update are served under the current space.
\textbf{(b)} One \ours{} update freezes each generated output, scores tools
by leave-one-out likelihood, aggregates the scores within the task, and
uses the ranking to prune the task's tool space.
}
\label{fig:overview}
\vspace{-4mm}
\end{figure}
We introduce \ours{} for
continually constructing task-specific tool spaces from an agent's own output
experience while keeping the model parameters fixed. As tasks
$c_1,\ldots,c_T$ arrive sequentially, each incoming task $c_t$ provides a set
of requests $Q_t$. As shown in Fig.~\ref{fig:overview}, the agent first serves these requests under a candidate tool
space and records the resulting tool-use traces. \ours{} then
(1) measures how much each observed tool output supports the model's realized
answer, (2) aggregates this evidence across requests of the same task to obtain
a task-level tool ranking, and (3) converts the ranking into a persistent
task-specific tool space $E_t$. The resulting space is stored and reused when
serving subsequent requests of $c_t$. As new tasks arrive, the deployed agent
therefore continually expands its collection of task-specific spaces
$\{E_1,\ldots,E_t\}$ without modifying the underlying model $p_\theta$. 
Figure~\ref{fig:overview} summarizes the procedure; implementation details are given in Appendix~\ref{app:method-details}.
\ours{} can also be applied to a mix of tasks in a batch, with details in Appendix~\ref{app:routing}.

\vspace{-1mm}
\subsection{LOTS: Output-Aware Tool Scoring Via Likelihood Guidance}\label{sec:lots-a} %
\vspace{-1mm}
For an incoming task $c_t$, let $ Q_t=\{x_1,\ldots,x_{n_t}\}$
denote the requests used to construct its tool space. Consider a request
$x\in Q_t$ served by the agent, and let $y_x$ denote the model's realized
output. We write $h_x$ for the recorded context from which $y_x$ is produced,
including the request, available tool documentation, tool calls, and observed
tool outputs. Our goal is to measure the contribution of each tool from what it
\emph{actually produced during execution}. For a tool $s$, we construct
an intervened context $h_x^{-s}$ in which the output evidence attributable to
$s$ is removed from the recorded execution. We then keep the realized output
$y_x$ fixed and define the request-level contribution of $s$ as
\begin{equation}
I_x(s)
=
\mathcal{L}_{\theta}(y_x \mid h_x)
-
\mathcal{L}_{\theta}(y_x \mid h_x^{-s}),
\label{eq:tool_score}
\end{equation}
where
\begin{equation}
\mathcal{L}_{\theta}(y\mid h)
=
\frac{1}{|y|}
\sum_{j=1}^{|y|}
\log p_\theta(y_j\mid h,y_{<j})
\label{eq:output_likelihood}
\end{equation}
is the mean log-likelihood over the scored output tokens. A positive $I_x(s)$ means that removing the observed evidence from tool $s$
reduces the model's support for the same realized output, indicating that the tool contributed useful evidence to that execution.
\\
Two design choices are important. First, \ours{} evaluates the \emph{same frozen output} under both contexts, measuring how tool evidence changes model support without additional rollouts. Second, each tool is evaluated alongside the other tools used in the execution, preserving dependencies in compositional tool use. \ours{} does not observe correctness, but measures which tool evidence the model relies on. Empirically, correct and incorrect outputs often yield similar task-level rankings. Appendix~\ref{app:diagnostics} examines when this holds or breaks down. Benchmark-specific interventions and scored tokens are also detailed in Appendix~\ref{app:implementation}. 
\vspace{-1mm}
\subsection{Task-Level Tool-Space Construction}
\label{sec:lots-update}
\vspace{-1mm}
The score $I_x(s)$ measures the contribution of tool $s$ to a single request.
To construct a tool space for the recurring task $c_t$, \ours{} aggregates
these request-level contributions across the task's output experience. Let
$
Q_t(s)
=
\{x\in Q_t : s \text{ is exposed and scoreable for } x\}
$
denote the requests that provide evidence for tool $s$. Its
task-level contribution
\begin{equation}
\bar I_t(s)
=
\frac{1}{|Q_t(s)|}
\sum_{x\in Q_t(s)} I_x(s).
\label{eq:task_score}
\end{equation}
Thus, evidence collected from otherwise independent requests is consolidated
into a shared task-level estimate rather than discarded after each
interaction. A tool that is not exposed on a request contributes no evidence
for that request. Let $A_t\subseteq S$ denote the tools eligible for optimization. \ours{} ranks
them according to $\bar I_t(s)$ and defines the high-value set
\begin{equation}
E_t
=
\operatorname{TopK}_{s\in A_t} \bar I_t(s),
\label{eq:topk}
\end{equation}

where $K$ specifies the tool-space budget, with its protocol
described in Appendix~\ref{app:budget}.
Fix a common menu $S$ with $|S|=m$ and the recorded outputs.
Let $F(A)$ denote the mean fixed-output likelihood when only
the evidence of $A\subseteq S$ is retained, so that
$\bar I_t(s)=F(S)-F(S\setminus\{s\})$, and define
\[
\widehat F(A)
=
F(S)-\sum_{s\notin A}\bar I_t(s).
\]
The following result relates top-$K$ selection to maximizing $F$
under bounded tool interactions, a detailed proof is provided in Appendix~\ref{app:theory}.
\begin{theorem}[Selection under bounded interactions]
\label{thm:selection}
Let $r=m-K$. Suppose
$
|\Delta_{s,u}F(B)|\le\beta,
$
for every $B\subseteq S$ with $K\le|B|\le m-2$
and distinct $s,u\notin B$, where
$
\Delta_{s,u}F(B)
=
F(B\cup\{s,u\})-F(B\cup\{s\})
-F(B\cup\{u\})+F(B)
$
is the second-order difference
\citep{sundararajan2020shapley}.
Then $E_t$ in \eqref{eq:topk} maximizes $\widehat F(A)$
over $|A|=K$, and, for every such $A$,
$
|F(A)-\widehat F(A)|\le\binom{r}{2}\beta.
$
Consequently, for
$A^\star\in\arg\max_{|A|=K}F(A)$,
$
0\le F(A^\star)-F(E_t)\le r(r-1)\beta.
$
Both bounds are zero for $r\le 1$. Proofs are in Appendix~\ref{app:theory}.
\vspace{-1mm}
\end{theorem}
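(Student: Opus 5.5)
The plan is to prove the three claims in order: that top-$K$ maximizes $\widehat F$, the approximation bound, and then the optimality gap.

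\emph{Top-$K$ maximizes $\widehat F$.} For $|A|=K$ we have $\widehat F(A)=F(S)-\sum_{s\notin A}\bar I_t(s)$. Maximizing this is the same as minimizing the sum of $\bar I_t$ over the $r$ excluded tools. That minimum is reached by excluding the $r$ smallest scores, which means keeping the $K$ largest, and this is exactly $E_t$ in \eqref{eq:topk}. Ties do not affect the value.

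\emph{The bound $|F(A)-\widehat F(A)|\le\binom{r}{2}\beta$.} Fix $A$ with $|A|=K$ and list its complement as $S\setminus A=\{s_1,\ldots,s_r\}$. Define the nested sets $B_i=S\setminus\{s_1,\ldots,s_i\}$, so that $B_0=S$ and $B_r=A$. Telescoping gives
\[
F(S)-F(A)=\sum_{i=1}^{r}\bigl[F(B_{i-1})-F(B_i)\bigr]
=\sum_{i=1}^r \bigl[F(B_i\cup\{s_i\})-F(B_i)\bigr].
\]
The singleton scores are $\bar I_t(s_i)=F(S)-F(S\setminus\{s_i\})$. Each is the marginal gain of $s_i$ at the base $S\setminus\{s_i\}$. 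The telescoped terms are the marginal gains of $s_i$ at the smaller base $B_i$.

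I move from base $B_i$ up to $S\setminus\{s_i\}$ by adding the earlier-removed elements $s_1,\ldots,s_{i-1}$ one at a time. Each addition changes the marginal gain of $s_i$ by a second-order difference $\Delta_{s_i,s_j}F(C)$. Here $C$ is an intermediate set with $B_i\subseteq C\subseteq S\setminus\{s_i,s_j\}$. Such a $C$ satisfies $K\le|C|\le m-2$, and both $s_i$ and $s_j$ lie outside it, so the hypothesis gives $|\Delta_{s_i,s_j}F(C)|\le\beta$. The term for $s_i$ therefore differs from $\bar I_t(s_i)$ by at most $(i-1)\beta$. Summing over $i$ gives $|F(S)-F(A)-\sum_{s\notin A}\bar I_t(s)|\le\sum_{i=1}^r(i-1)\beta=\binom r2\beta$, which is exactly $|F(A)-\widehat F(A)|\le\binom r2\beta$.

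\emph{The optimality gap.} The lower bound $F(A^\star)\ge F(E_t)$ holds by definition of $A^\star$. For the upper bound,
\[
F(A^\star)\le \widehat F(A^\star)+\tbinom r2\beta\le \widehat F(E_t)+\tbinom r2\beta\le F(E_t)+2\tbinom r2\beta,
\]
and $2\binom r2=r(r-1)$. When $r\le1$ the binomial coefficient vanishes, so both bounds are zero.

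The main obstacle is the bookkeeping in the second step. One must check that every intermediate set $C$ at which a second difference is evaluated has size in $[K,m-2]$ and excludes both of its indices. This works because $C$ always contains $A$ and always omits $s_i$ and $s_j$, which is precisely why the hypothesis is stated on that size range.
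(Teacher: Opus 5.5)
Your proof is correct and follows the paper's route. You telescope $F(S)-F(A)$ along a deletion order, bound each step's deviation from the leave-one-out score by $(i-1)\beta$ via second-order differences, sum to $\binom{r}{2}\beta$, and finish with the three-inequality chain against $A^\star$. You also spell out the bookkeeping the paper only asserts: every intermediate set $C$ satisfies $A\subseteq C\subseteq S\setminus\{s_i,s_j\}$, so $K\le|C|\le m-2$.
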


\vspace{-1mm}
\subsection{Continual Task-Specific Tool-Space Evolution}
\vspace{-1mm}
The constructed space $E_t$ is not a request-local selection. It is stored as
persistent deployment state associated with task $c_t$ and reused for all
requests of the same task. When the next task arrives, \ours{} independently
constructs its corresponding space from that task's own output experience.
After observing tasks $c_1,\ldots,c_t$, the deployed agent therefore maintains
$\mathcal{E}_t = \{E_1,\ldots,E_t\},$
and the continual update is
\begin{equation}
\mathcal{E}_{t-1}
\;\longrightarrow\;
\mathcal{E}_{t}
=
\mathcal{E}_{t-1}\cup\{E_t\}.
\label{eq:continual_space}
\end{equation}
Earlier task spaces remain unchanged when a new task arrives. Consequently,
new task knowledge is incorporated by expanding the collection of
task-specific tool spaces rather than by modifying $p_\theta$ or repeatedly
reconstructing a tool configuration for every request. In this sense, the
deployed agent self-evolves through its tool-space state:
\[
c_t
\;\longrightarrow\;
Q_t
\;\longrightarrow\;
\{I_x(s)\}
\;\longrightarrow\;
\bar I_t(s)
\;\longrightarrow\;
E_t.
\]
\begin{wraptable}{r}{0.49\textwidth}
\vspace{-4mm}
\centering
\footnotesize
\renewcommand{\arraystretch}{1.08}
\begin{tabular}{@{}p{\linewidth}@{}}
\toprule
\textbf{Algorithm 1: Task-specific tool-space construction} \\
\midrule

\textbf{Input:}
Task $c_t$, request batch $Q_t$, candidate space
$E_t^{\mathrm{c}}\subseteq\mathcal{S}$,
budget $K$, fixed model $p_\theta$. \\[3pt]

\textbf{1. Serve.}
Execute $Q_t$ under the fixed space $E_t^{\mathrm{c}}$
and record traces $(h_x,y_x)$. \\[3pt]

\textbf{2. Score.}
For each request $x$ and exposed, scoreable tool $s$,
keep $y_x$ fixed and compute
$I_x(s)=L_\theta(y_x\mid h_x)
-L_\theta(y_x\mid h_x^{-s})$.
Here $h_x^{-s}$ removes $s$'s calls and observations; where a later
tool consumes $s$'s output, the chain is re-executed without $s$
(Appendix~\ref{app:scoring-impl}). \\[3pt]

\textbf{3. Aggregate.}
Let $Q_t(s)$ contain the requests providing scores for $s$,
and $A_t=\{s\in E_t^{\mathrm{c}}:|Q_t(s)|>0\}$.
Compute
$\bar I_t(s)=
\frac{1}{|Q_t(s)|}\sum_{x\in Q_t(s)}I_x(s)$. \\[3pt]

\textbf{4. Prune.}
Set $E_t=\operatorname{TopK}_{s\in A_t}\bar I_t(s)$,
retaining all eligible tools if $|A_t|<K$. \\[3pt]

\textbf{5. Deploy.}
Store $E_t$ for subsequent task-$c_t$ requests;
repeat with new evidence when updating. \\
\bottomrule
\end{tabular}
\vspace{-4mm}
\label{alg:lots}
\end{wraptable}
If additional experience from an existing task becomes available, its stored
space can be revised using the same scoring and aggregation procedure. Denoting
successive versions by $E_t^{(0)},E_t^{(1)},\ldots$, a new batch of requests
from $c_t$ produces
\begin{equation}
E_t^{(k)}
\;\xrightarrow{\;\text{new output experience}\;}
E_t^{(k+1)}.
\label{eq:within_task}
\end{equation}
This within-task refinement is optional to the continual task-arrival setting:
the core mechanism only requires that each incoming task constructs and
retains its own persistent space. Later batches of a task are served under
its current space, their traces update the running scores, and the space is
recomputed from them. Details of the running-score updates are given in
Appendix~\ref{app:aggregation}.

\vspace{-1mm}
\section{Experiments}
\label{sec:experiments}
\vspace{-1mm}
We evaluate task performance (\S\ref{sec:exp-shared}),
evolution across and within tasks (\S\ref{sec:exp-loop}),
and serving efficiency as well as cross-model reuse
(\S\ref{sec:exp-eff}).
Scoring analyses and ablations are presented in
Appendix~\ref{app:diagnostics}.
\vspace{-1mm}

\subsection{Experimental Setup}
\label{sec:experimental-setup}
\label{sec:exp-setup}
\vspace{-1mm}
Following \S\ref{sec:problem}, each task is a TGB family, a BFCL API class, or a GTA-Atomic task type.
 We evaluate the top-$K$ spaces of the \ours{} ranking over a predefined grid and keep the
smallest $K$ attaining the highest fitting accuracy (Appendix~\ref{app:budget}).
Likelihood scoring does not use correctness labels,
whereas only budget selection in these experiments uses fitting labels.
\begin{table}[!tb]
\centering
\caption{
\textbf{TGB dataset accuracy (\%).}
One tool space fitted per task family.
Search-based baselines include BM25, Dense, and Tool2Vec;
the LLM-based router selects tools per request.
$^\dagger$ marks methods that read correctness labels on the fitting requests.
Bold marks the best result in each column.
}
\label{tab:tgb-family}
\scriptsize
\setlength{\tabcolsep}{2.6pt}
\begin{tabular}{@{}lcccccccc@{}}
\toprule
& Qwen2.5-7B & Qwen2.5-14B & Llama3.1-8B & Mistral-7B
& Qwen3.5-9B & Qwen3-8B & Phi4-14B & Gemma4-12B \\
\midrule
No tools
& 10.8 & 11.4 & 12.9 & 10.0 & 10.5 & 10.4 & 11.0 & 12.8 \\
Keep all
& 69.6 & 88.0 & 77.7 & 46.4 & 50.9 & 56.5 & 79.7 & 86.7 \\
Random
& 23.4 & 19.3 & 41.3 & 11.5 & 27.5 & 10.1 & 24.3 & 20.0 \\
\midrule
\multicolumn{2}{@{}l}{\textit{Search-Based}}  &  &  &  & \\
BM25
& 19.3 & 36.3 & 40.0 & 13.4 & 24.9 & 10.0 & 39.3 & 37.5 \\
Dense
& 20.5 & 36.3 & 37.6 & 17.2 & 18.5 & 18.5 & 11.1 & 38.8 \\
Tool2Vec
& 38.1 & 45.3 & 57.4 & 24.2 & 23.9 & 37.9 & 55.6 & 47.5 \\
\midrule
\multicolumn{2}{@{}l}{\textit{LLM-Based}}  &  &  &  & \\
LLM as router
& 31.7 & 62.5 & 39.4 & 29.3 & 36.1 & 44.7 & 61.7 & 66.9 \\
Jev
& 60.3 & 70.8 & 68.2 & 51.5 & 56.6 & 49.5 & 81.7 & 71.3 \\
\midrule
Beam Search$^\dagger$ & 26.6 & 38.1 & 19.6 & 21.2 & 23.8 & 36.8 & 64.1 & 30.0 \\
TTO$^\dagger$
& 48.6 & 88.0 & 65.4 & 63.1 & 63.7 & 63.0 & 72.1 & 86.8 \\
\hdashline
\ours{}
& \textbf{82.9} & \textbf{91.8} & \textbf{84.8} & \textbf{65.8}
& \textbf{65.2} & \textbf{81.5} & \textbf{92.1} & \textbf{91.4} \\
\bottomrule
\end{tabular}
\vspace{-3mm}
\end{table}
\\
\textbf{Benchmarks.} We evaluate \ours{} on \emph{three} benchmarks.
\textbf{TGB (controlled composition)}, provides a controlled compositional setting for studying
tool dependencies and distractors, and contains
4{,}000 requests across ten families and fifteen tools
(seven task tools and eight distractors); Appendix~\ref{app:datasets} gives its construction and the family chains (Table~\ref{tab:tgb-families}), and Appendix~\ref{app:worked} follows one TGB task through the whole procedure.
Each task uses 80 requests for fitting and 320 for evaluation,
enabling controlled studies of tool dependencies, distractors,
and space updates. 
\textbf{BFCL-v4 multi-turn (multi-turn calling})~\citep{bfcl} evaluates multi-turn function calling
with task-specific documentation allocation.
Success requires completing the required call sequence correctly;
each API class uses 10 records for fitting and 40 for evaluation.
\textbf{GTA-Atomic (executable ReAct agent)}, the atomic-task split of GTA~\citep{gta}, evaluates ReAct task execution through the official harness on three annotated task types. 
Table~\ref{tab:gta-combined} reports accuracy over the 112 evaluation requests of the three types pooled, from one serving session per model.
Appendices~\ref{app:datasets} and~\ref{app:scoring-impl}
detail the datasets, tool libraries, and scoring interventions.
\\
\noindent\textbf{Models.}
We evaluate Qwen2.5~\citep{qwen25} (3B/7B/14B),
Qwen3~\citep{yang2025qwen3} (4B/8B),
Qwen3.5~\citep{qwen35} (2B/4B/9B),
Llama3.1~\citep{grattafiori2024llama} (8B),
Mistral~\citep{jiang2023mistral7b} (7B),
Phi-4~\citep{abdin2024phi} (14B), and
Gemma-4~\citep{team2026gemma} (12B) instruction-tuned models, with thinking disabled and greedy decoding to reduce sampling variability.
Methods share fitting and evaluation splits within each model and benchmark.
\\
\noindent\textbf{Baselines.}
\emph{Keep all} serves the full registry and \emph{No tools} an empty menu.
\emph{Random} draws $K$ tools.
\underline{Search-Based}:
\emph{BM25}~\citep{robertson2009bm25} ranks tool descriptions by lexical similarity to the fitting requests;
\emph{Dense} uses BGE description embeddings~\citep{xiao2024cpack} and \emph{Tool2Vec}~\citep{tool2vec} generates queries from tool descriptions, encodes them, and aggregates their embeddings into a tool representation.
\underline{LLM-Based}:
\emph{LLM as router}~\citep{shen2023hugginggpt} gives the serving model the request and the name and description of every registry tool, asks it for the list of tools the request needs, and serves exactly that list;
\emph{Jev}~\citep{jevdocs} is an external relevance model that scores every tool of a request with an independent probability from the request and the tool descriptions, and serves the top-$K$ tools.
We additional adapt two output-aware methods:
\emph{TTO}~\citep{octotools} starts from the best single tool on fitting accuracy and adds the tool with the largest gain while accuracy strictly increases. 
\emph{Beam Search}~\citep{lowerre1976harpy} keeps the five best subsets at each size.
TTO and Beam Search need ground-truth labels and is costly in loop tool space composition.
Appendices~\ref{app:conventions} and~\ref{app:tablenotes} give implementation details.

\vspace{-1mm}
\subsection{Effectiveness of \ours{} in Tool Space Optimization}
\label{sec:exp-shared}
\label{sec:exp-tgb}
\label{sec:exp-gta2}
\label{sec:exp-bfcl}
\vspace{-1mm}
Our central claim is that \ours{} constructs a persistent task-specific tool space, which improves task performance.
We evaluate on three benchmarks that cover synthetic compositional tool use, multi-turn function calling, and ReAct execution with real tools.
\\
\textbf{TGB: task-specific spaces remove distractors.} \ours{} improves over the full fifteen-tool menu on all eight models in Table~\ref{tab:tgb-family}.
The gains are largest on models most affected by irrelevant tools: accuracy rises from $0.565$ to $0.815$ on Qwen3-8B and from $0.464$ to $0.658$ on Mistral-7B.
Across task families, \ours{} retains between roughly three and seven tools on average from the original 15 tool registry. Additionally, Table~\ref{tab:kselect} (Appendix~\ref{app:budget}) shows how each family's budget is chosen and which tools it keeps.
Thus, the constructed spaces are the right ones: on Qwen2.5-7B every family's space contains its gold tool chain, and for five of the ten families it is exactly that chain.
\\
\noindent\textbf{BFCL: task-specific spaces benefit multi-turn tool calling.}
BFCL evaluates success over complete multi-turn records, where an incorrect required call can cause the entire record to fail.
Beam search is computationally prohibitive in this setting, so we exclude it from the BFCL experiments.
\ours{} gives the best space on three of the five models and is within 0.21 points of the best, Jev, on the other two: on Qwen3-4B it reaches 28.12\% against 25.62\% for Dense, the strongest baseline there, and on Gemma4-12B it reaches 28.75\% against 28.96\% for Jev. The LLM router is below \ours{} on all five models, by 1.5 to 19.6 points, which reflects the difficulty of selecting tools before a multi-turn interaction unfolds.
These gains of \ours{} therefore reflect improvements in completing multi-turn interactions, beyond isolated tool-call decisions.
\begin{table}[!tb]
\centering
\small
\setlength{\tabcolsep}{3.5pt}
\caption{
\textbf{BFCL v4 \texttt{multi\_turn} accuracy.}
For each task, the first 10 records fit
the space and its remaining 40 records evaluate it (160 records in total). $^\dagger$ marks methods that read correctness labels for all fitting requests. Bold marks the best result in each column; underlining marks the second best. Results are averaged over three seeds.}
\vspace{-2mm}
\label{tab:bfcl-models}
\begin{tabular}{lccccc}
\toprule
Menu & Qwen3-4B & Qwen3.5-4B & Qwen3.5-9B & Gemma4-12B & Phi-4-mini-3.8B \\
\midrule
No tools
& 0.00 & 0.00 & 0.00 & 0.00 & 0.00 \\
Keep all
& $17.71\pm0.95$ & \underline{$14.37\pm0.63$} & $23.12\pm0.63$ & $24.79\pm0.36$ & $3.33\pm0.36$ \\
Random
& $10.21\pm1.06$ & $2.71\pm1.06$ & $13.96\pm0.29$ & $13.54\pm0.29$ & $3.54\pm0.29$ \\
\midrule
\multicolumn{2}{@{}l}{\textit{Search-Based}}  &  &  &  & \\
BM25
& $18.13\pm0.51$ & $12.08\pm0.29$ & $27.08\pm0.78$ & $25.83\pm1.93$ & $3.33\pm0.29$ \\
Dense
& \underline{$25.62\pm1.77$} & $13.96\pm0.29$ & \underline{$27.71\pm0.29$} & $26.04\pm0.29$ & $3.75\pm0.00$ \\
Tool2Vec 
& $24.58\pm0.95$ & $10.42\pm0.95$ & $23.12\pm0.36$ & $23.54\pm1.77$ & $3.54\pm0.36$ \\
\midrule
\multicolumn{2}{@{}l}{\textit{LLM-Based}}  &  &  &  & \\
LLM as router
& $12.71\pm1.18$ & $5.83\pm0.00$ & $9.79\pm0.29$ & $21.25\pm0.88$ & $3.54\pm0.29$ \\
Jev
& $22.50\pm0.00$ & $8.96\pm0.36$ & $22.50\pm0.62$ & {\boldmath$28.96\pm1.57$} & {\boldmath$5.21\pm0.36$} \\
\midrule
TTO$^\dagger$
& {$20.83\pm0.36$} & $12.29\pm1.57$ & {$22.79\pm0.72$} & $24.17\pm0.36$ & $4.79\pm0.36$ \\
\hdashline
\ours{}
&  {\boldmath$28.12\pm 0.72$} & {\boldmath$15.62\pm 0.95$} & {\boldmath$29.38\pm 0.62$}
& \underline{$28.75\pm1.25$} & \underline{$5.00\pm0.62$} \\
\bottomrule
\end{tabular}
\vspace{-4mm}
\end{table} 
\begin{figure}[!b]
\vspace{-6mm}
\centering
\includegraphics[width=\linewidth]{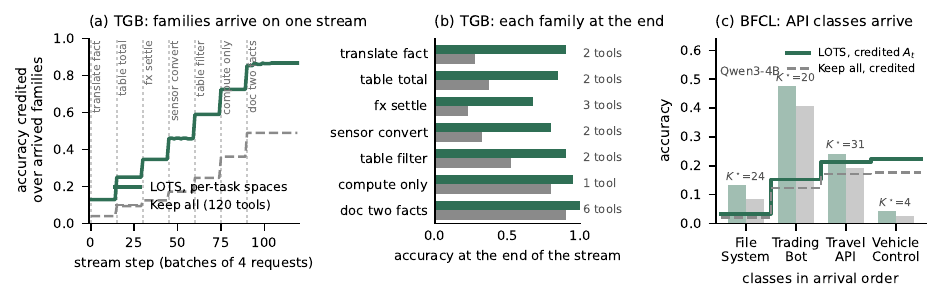}
\vspace{-9mm}
\caption{Accumulating task-specific spaces as tasks arrive. 
(a) Seven of the eight TGB families in this stream arrive on one request stream (Qwen2.5-7B, registry of 120 tools); a step is a batch of four requests, and dotted lines mark arrivals. The curve is credited accuracy $\mathrm{Acc}_t$, it rises in a jump when a family's space is added.
(b) Final accuracy and space size for each family.
(c) Four BFCL API classes arrive in sequence (Qwen3-4B). Bars give each class's accuracy on its own evaluation records; lines give $\mathrm{Acc}_t$ over the four classes, which ends at the mean of the bars.}
\label{fig:stream}
\vspace{-5mm}
\end{figure}
\\
\textbf{GTA-Atomic: task-specific spaces improve later execution by a ReAct agent.} We finally evaluate fitted spaces inside the ReAct execution loop of GTA-Atomic, where the agent calls executable tools over several turns.
For example, a correct output requires reading text from an image and computing with it. The constructed tool space is OCR together with a computation tool (Calculator or Solver).
\ours{} improves over Keep all on six of the seven models, with the largest gains on Qwen3-8B, from $10.71$ to $22.32$, and Qwen2.5-7B, from $7.14$ to $14.29$, where it ties Beam Search; Qwen2.5-14B is the one model where Keep all stays ahead ($8.04$ against $7.14$).
Appendix~\ref{app:interactions} enumerates all tool spaces of a six-tool universe to locate the \ours{} pair among them.
\begin{table}[b]
\vspace{-3mm}
\centering
\footnotesize
\setlength{\tabcolsep}{2pt}

\caption{
\textbf{GTA-Atomic accuracy (\%)}.
$^\dagger$ marks methods that read correctness labels on the fitting requests.
Bold denotes the best result in each column;
underlining denotes the second best.
}
\label{tab:gta-combined}
\vspace{-2mm}
\begin{tabular}{lccccccc}
\toprule
Method
& Qwen2.5-3B & Qwen2.5-7B & Qwen3-8B
& Qwen2.5-14B & Qwen3.5-9B & Phi-4-14B & G4-12B \\
\midrule
No tools
& 1.79 & 2.68 & 4.46 & 4.46 & 6.25 & 4.46 & 2.68 \\
Keep all
& \underline{7.14} & 7.14 & 10.71 & \textbf{8.04} & \underline{40.18} & 11.61 & 24.11 \\
Random
& 6.25 & 5.36 & 11.61 & 6.25 & 16.96 & 8.93 & 14.29 \\
\midrule
\multicolumn{2}{@{}l}{\textit{Search-Based}}  &  &  &  & \\
BM25
& 1.79 & 2.68 & 6.25 & 3.57 & 14.29 & 2.68 & 14.29 \\
Dense
& 0.00 & 1.79 & 1.79 & 1.79 & 12.50 & 0.89 & 5.36 \\
Tool2Vec
& 0.00 & 0.00 & 6.25 & 3.57 & 16.96 & 0.00 & 11.61 \\
\midrule
\multicolumn{2}{@{}l}{\textit{LLM-Based}}  &  &  &  & \\
LLM as router
& \underline{7.14} & 1.79 & 15.18 & 4.46 & 25.00 & 8.04 & 25.00 \\
Jev
& 5.36 & 9.82 & 12.50 & 6.25 & 35.71 & 8.04 & 24.11 \\
\midrule
Beam Search$^\dagger$ & \underline{7.14} & \textbf{14.29} & \underline{18.75} & 4.46 & 33.93 & 14.29 & 25.89 \\
TTO$^\dagger$
& 6.25 & \underline{12.50} & 16.96 & 2.68 & 33.93 & \underline{15.18} & \underline{27.68} \\
\hdashline
\ours{}
& \textbf{13.39} & \textbf{14.29} & \textbf{22.32} & \underline{7.14} & \textbf{45.54} & \textbf{16.07} & \textbf{29.46} \\
\bottomrule
\end{tabular}
\end{table}

Additionally, Appendix~\ref{sec:exp-why} shows why output-based, full-context leave-one-out scoring identifies these spaces while description-based and Shapley-thresholded selection do not.

\subsection{Persistent Task-Specific Evolution as Tasks Arrive}
\label{sec:exp-loop}
\begin{figure}[!t]
\centering
\includegraphics[width=\linewidth]{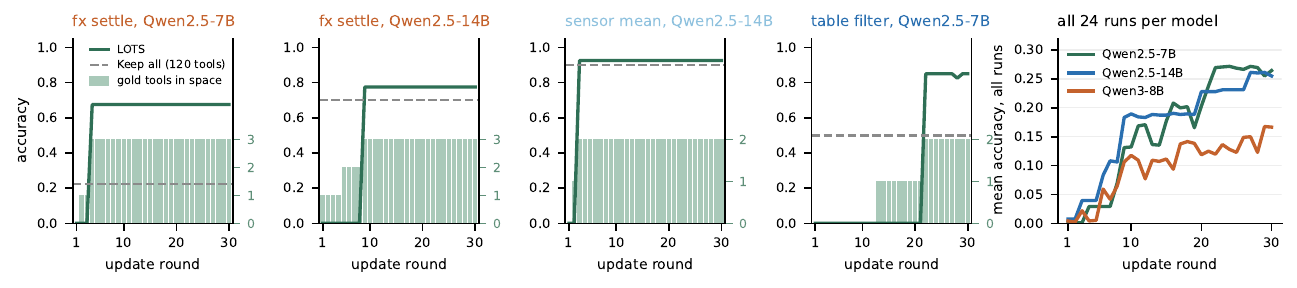}
\vspace{-8mm}
\caption{Improving an existing task space through successive request batches.
TGB, 120-tool registry.
Left: four family updates showing accuracy on the task's fixed evaluation requests.
Right: mean accuracy after each update over all 24 runs per model.}
\label{fig:updates}
\vspace{-7mm}
\end{figure}
Across tasks, we ask whether \ours{} can construct and retain useful task-specific spaces as new task types arrive (Eq.~\ref{eq:continual_space}).
Within a task, we ask whether later output experience of the same task can improve its stored space, $E_t^{(k)}\to E_t^{(k+1)}$ (Eq.~\ref{eq:within_task}).
\\
\textbf{Across tasks: constructing and retaining separate spaces.}
When a new task $c_t$ arrives, \ours{} constructs its space $E_t$ from that task's output traces.
The deployed agent accumulates a collection of reusable configurations, with each task's experience updating only its own space.
Figure~\ref{fig:stream}(a--b) follows seven of the eight families in this stream.
At each step, the active family with the fewest completed rounds receives a batch of four requests.
After $t$ task types have arrived, we report credited accuracy
$
A_t=\frac{1}{T}\sum_{j=1}^{t}
\mathrm{Acc}(c_j;p_\theta,E_j),
$
where $T$ is the total number of task types and each $E_j$ is the corresponding task's current space.
This metric measures performance contributed by the tasks encountered so far.
\ours{} remains above Keep all throughout the stream and finishes at $0.87$ versus $0.49$.
Table~\ref{tab:stream} in Appendix~\ref{app:evolution}
reports the full eight-family stream on seven models and three seeds;
Figure~\ref{fig:stream}(c) also shows the BFCL setting:
each arriving API class fits a documentation configuration on its first records and reuses it on subsequent records.
On Qwen3-4B, all four classes improve over Keep all, with final credited accuracy of $22.3\%$ versus $17.7\%$.
\\
\textbf{Within a task: improving a space with subsequent experience.}
We next fix the task type and examine how its space changes across successive requests.
We limit initial tool exposure: the registry contains 120 tools, but each request receives at most 16.
The initial space may therefore lack tools needed by the task.
Figure~\ref{fig:updates} shows that later experience can improve spaces that are incomplete.
The sharp increases occur when an update adds a missing tool and completes a required tool chain, enabling the agent to solve multiple evaluation requests that previously lacked the necessary tools.
Across all 24 runs per model, mean accuracy also increases over successive updates.
Together, the two studies demonstrate how \ours{} accumulates spaces across tasks and updates individual spaces.

\vspace{-1mm}
\subsection{Serving Efficiency and Cross-Model Reuse}
\label{sec:exp-eff}
\label{sec:exp-cost}
\label{sec:exp-models}
\vspace{-1mm}
In this section, we examine the resulting token savings and whether a space fitted with one model can be reused by another.
\\
\textbf{Construction and serving cost.}
Once constructed, a task-specific space serves subsequent requests
without repeating tool search or ranking over the global registry.
Figure~\ref{fig:cost} reports the combined token cost of tool selection and serving the evaluation requests on GTA-Atomic \texttt{read\_arith}.
Under this accounting, \ours{} uses
$2.0$-$4.0\times$ fewer tokens than TTO and $4.4$-$9.5\times$ fewer than Beam Search.
On TGB, task-specific pruning reduces serving prompt tokens by 33-46\% relative to the full fifteen-tool menu (Appendix~\ref{app:efficiency}).
Figure~\ref{fig:gta-frontier2} further compares accuracy with
serving prompt tokens on GTA-Atomic \texttt{read\_arith}, showing where
smaller spaces improve accuracy and efficiency.
Appendix~\ref{app:efficiency} details the cost and the number of requests needed to recover the construction cost.
\\
\textbf{Reuse across models.}
We next test whether changing the serving model requires
refitting the tool space.
On BFCL, we fit task-specific documentation rankings from
a source model's traces and use them with a different serving model.
Transferred rankings outperform the serving model's
Keep all baseline in 18 of the 20 cross-model pairs in
Figure~\ref{fig:xfer-matrix}.
This suggests that the rankings capture task-level preferences
that can remain useful across models.
Reusing an existing ranking avoids collecting and scoring a new set of traces from the serving model.
Since likelihood access is needed only at fitting time, a space fitted with an open-weight model could also serve closed API models, a direct extension of this reuse.

\begin{figure}[!tb]
\centering
\begin{minipage}[t]{0.48\textwidth}
\centering
\includegraphics[width=\linewidth]{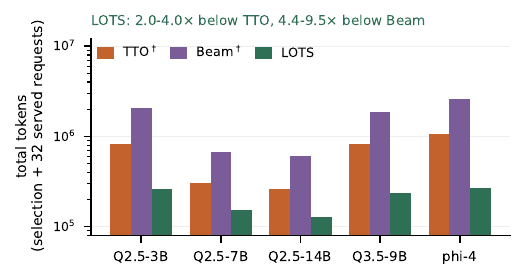}
\vspace{-4mm}
\captionof{figure}{Total token cost on GTA-Atomic \texttt{read\_arith}: tokens spent on tool selection plus the served evaluation requests, log scale. $\dagger$ methods read correctness labels.}
\label{fig:cost}
\end{minipage}\hfill
\begin{minipage}[t]{0.48\textwidth}
\centering
\includegraphics[width=\linewidth]{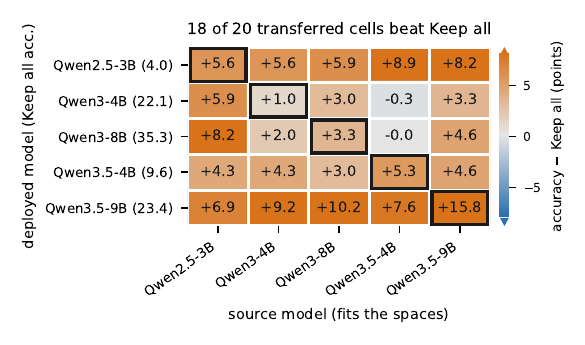}
\vspace{-6mm}
\captionof{figure}{Cross-model tool-space reuse on BFCL.
Rows: execution models; columns: sources; cells: accuracy relative to each row's Keep all baseline; outlined cells: own rankings.}
\label{fig:xfer-matrix}
\end{minipage}
\vspace{-6mm}
\end{figure}

\begin{figure}[!tb]
\centering
\includegraphics[width=\linewidth]{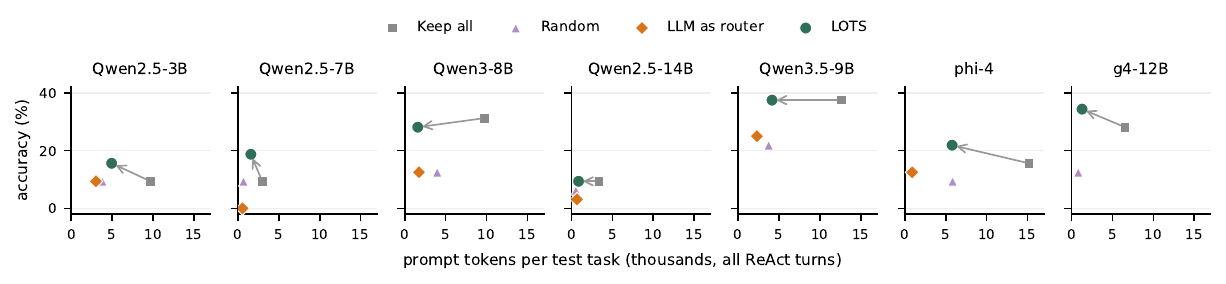}
\vspace{-6mm}
\caption{\textbf{Accuracy against serving cost on GTA-Atomic \texttt{read\_arith}.} Prompt tokens per test task over all ReAct turns against accuracy.
The arrow runs from Keep all to \ours{}.}
\label{fig:gta-frontier2}
\vspace{-5mm}
\end{figure}

\section{Conclusion}
\label{sec:discussion}
\vspace{-1mm}
We introduced \ours{}, a likelihood-guided approach to tool-space self-evolution that turns an agent's own execution experience into persistent, task-specific tool spaces. By measuring how observed tool outputs support a fixed realized answer, \ours{} converts request-level traces into reusable tool configurations that can be refined as new experience arrives. Across compositional tool use, multi-turn function calling, and ReAct execution, these learned spaces consistently reduce unnecessary tool context while improving or preserving task performance. Our results show that an agent's executable environment itself can serve as persistent adaptive state. We provide additional discussion, including related work (App.~\ref{sec:related}), in the Appendix.

\subsection*{AI use statement}
AI assistants were used to support manuscript editing and revision.
The authors reviewed the resulting text and take full responsibility
for the content, experimental results, and conclusions of this paper.

\subsection*{Ethics statement}
This work studies task-specific tool-space optimization for language-model
agents using existing benchmarks and synthetic tasks.
Our experiments evaluate task performance and tool-use efficiency;
they do not establish the safety of autonomous execution.
Likelihood-based tool scores measure support for a model's generated
answer and do not certify its correctness or the safety of a tool.
Practical execution therefore requires appropriate tool permissions
and oversight.

\subsection*{Reproducibility statement}
The main text and appendices describe the task construction,
fitting and evaluation protocols, candidate tool libraries,
model configurations, scoring interventions, tool-budget selection,
and baseline implementations.
The appendices additionally report fixed-budget comparisons,
scoring ablations, and settings for repeated updates and
cross-model tool-space reuse.
We plan to release the implementation, configuration files,
and evaluation scripts, together with request identifiers and
split definitions where permitted by the source datasets.
\bibliographystyle{iclr2027_conference}
\bibliography{iclr2027_conference}

\appendix
\clearpage

\renewcommand{\contentsname}{Table of Contents}
\addtocontents{toc}{\protect\setcounter{tocdepth}{2}}
{\hypersetup{pdfborder={0 0 0}}\color{blue}\tableofcontents}

\newpage
\vspace{-1mm}
\section{Related Work}
\label{sec:related}
\vspace{-1mm}
\textbf{Tool selection for LLM agents.} Tool-augmented language models learn when and what to call from self-supervised signal \citep{toolformer}, and scale to large API inventories with a per-request retriever trained on labeled query--API pairs \citep{gorilla,qin2024toolllm}; later retrievers target complete tool sets \citep{colt,hyset}, hierarchical libraries \citep{toolrerank,anytool}, usage-derived embeddings \citep{tool2vec} and iterative requests \citep{mcpzero}, and a parallel line rewrites documentation once before execution \citep{easytool,draft}.
Frontier harnesses select in two stages, a coarse hand-picked split into a direct set and a deferred pool, then per-request BM25 or regex ranking over the deferred pool \citep{codex_toolsearch,claude_toolsearch}; replacing annotated sets with retrieved ones lowers agent pass rates, strong retrievers still miss target tools \citep{toolret,planbenchxl}, and lexical search misses tools whose names overlap.
Selection in these systems is a serving-time decision for one request over a library kept fixed by hand; where a set is chosen offline, as in the task-specific toolset optimization of \citet{octotools}, the search reads graded rollouts.
AutoTool trains the policy to select tools from an evolving inventory during reasoning \citep{autotool}, and ToolSelf lets the agent reconfigure its own toolbox, strategy and context at run time through tool calls \citep{zhou2026toolself}; both learn a selection or configuration policy from trajectories, whereas \ours{} keeps the model fixed and stores the decision as a ranking over the existing registry.
MetaTool trains transferable tool-use ability into the model through meta-task augmentation \citep{metatool}; the ranking \ours{} stores moves between models with no training, which is what the cross-model reuse of \S\ref{sec:exp-eff} tests.
\ours{} decides after the tools have run, from the trace itself, and keeps the decision as state that later requests share; its comparison against self-report, documentation-based and gradient-based selection is in \S\ref{sec:exp-shared} and Appendix~\ref{sec:exp-why}.
A coarse per-request retriever and a persistent task-specific configuration answer different questions, which candidates a request may see and which tools the task has earned, and could be composed (\S\ref{sec:discussion}).
The premise that a smaller menu can beat the full one is consistent with models using long contexts unevenly \citep{lostinmiddle}.

\textbf{Attribution and valuation.} Context attribution asks which parts of a context a model's output relies on; ContextCite ablates context segments and fits a surrogate to the change in the output's likelihood \citep{contextcite}. \ours{} uses the same kind of ablation signal on tool observations and differs in what it does with it: it aggregates the per-request attributions over the requests of a task and stores the result as a persistent tool space that later requests are served with. Asking what a component is worth by removing it has two standard answers: the Shapley value, which averages a component's marginal contribution over every coalition of the others \citep{shapley1953}, and leave-one-out against the full set; data valuation uses both to price training examples \citep{datashapley}, choosing on grounds of cost.
Here the choice is made on grounds of what the tasks require.
On chained tasks a tool has no value apart from its partners, so a coalition-averaged value paired with an independent threshold reports a small marginal for a load-bearing tool, and a standalone score has nothing to rank; Appendix~\ref{sec:exp-why} measures both with exact Shapley values and finds the highly aligned signal, with aggregation explaining the difference.
\ours{} removes one tool at a time from the full coalition, so each tool is valued with its partners present; and because it scores the likelihood of the agent's own fixed answer rather than its correctness, the resulting ranking requires no labels. Labels are used only to choose the scalar budget K.
Gradient attribution over the tool observations is cheaper, one backward pass, and is compared there too; attention-based readings are not.
Trajectory-grounded credit assignment also drives EvoTool, which localizes a failure to one module of a tool-use policy and mutates that module through natural-language critique \citep{evotool}; \ours{} assigns credit to tools instead, by an intervention on the likelihood of a fixed answer under a fixed model, and needs neither a failure signal nor a mutation step.

\textbf{Self-evolving agents.} The survey \citet{selfevolving2025} defines a self-evolving agent as one that modifies its parameters, context, toolset or architecture from its own trajectories or feedback, and it already covers tools, from tool creation and refinement to tool management and selection; \citet{selfimprove2026} organise the same field by model versus scaffold.
Concrete systems differ in the object experience updates.
Dynamic Cheatsheet \citep{dynamiccheatsheet} keeps a curated memory of strategies and snippets: each query is answered with the memory built from the queries before it, the memory is updated after the answer.
ACE \citep{ace} keeps a context playbook of bullets that a generator, reflector and curator grow by incremental deltas; in its online setting each test sample is predicted under the current context and the context is then updated from that sample, with a batch size of one.
SkillOpt trains a natural-language skill document as the external state of a frozen agent and accepts an edit only when it improves a held-out score \citep{skillopt}.
These systems store experience as text the model reads; \ours{} stores it as the executable tool space the model is served.
Voyager~\citep{voyager} keeps an ever-growing library of executable skills retrieved for later tasks; 
TroVE  \citep{trove} grows a toolbox of reusable functions while solving a stream of tasks and periodically trims it; 
LATM has a tool-maker model write tools that a tool-user model applies, with a cache that stores the functionality of a class of requests for reuse \citep{latm}.
EvoSOP synthesizes standard operating procedures from trajectories into higher-order tools and iterates their construction, merging, evaluation and pruning \citep{evosop}, and SMITH trains one policy to both create tools and use them \citep{smith}; these methods grow the tool pool itself, whereas \ours{} ranks and prunes a fixed registry and could rank the tools such methods create.
SEARL jointly optimizes the policy and a tool-graph memory with trajectory- and step-level credit \citep{searl}, and Tool-R0 co-evolves a task generator and a solver by self-play reinforcement learning from zero data \citep{toolr0}; both change the model's weights under a reward, whereas \ours{} keeps $p_\theta$ fixed and needs no reward, so a space fitted by \ours{} can also serve a policy trained this way.
Adapting under a changing request distribution and the forgetting it induces are studied for model parameters in continual test-time adaptation \citep{cotta}, and continual-learning protocols for tool use \citep{wang2024llms}.
However, none of these methods builds output-aware task-specific tool spaces that evolve from experience traces.

\section{Implementation and Experimental Setup}
\label{app:method-details}
\label{app:protocols}
\label{app:audit}

\ours{} constructs a persistent tool space for each recurring task from the traces of requests served for that task. We first describe the benchmarks and fitting protocols, then give the scoring and update implementations used in the experiments. Throughout, $p_\theta$ is the frozen model, $x$ is a request, $y_x$ is its recorded output, and $h_x$ is the context used to score that output. 
The global tool registry (space) is $\mathcal{S}$, and $E_t\subseteq\mathcal{S}$ denotes the tool subset maintained for task $c_t$.
Figure~\ref{fig:overview} in the main text summarizes the procedure.

\subsection{Benchmarks and Task Definitions}
\label{app:datasets}
\label{app:benchmark-settings}

\textbf{TGB.}
TGB is a synthetic, text-only benchmark for compositional tool use, with 4,000 requests across ten families. Each request is generated from a hidden scene (a table, sensor series, calendar, or small document corpus) that only the tools can access. The question requires the family's tool chain, and the gold answer is computed from the scene. Upstream tools return multiple records, such as six table rows, five readings, or two passages, so the downstream computation requires combining their outputs. The \texttt{compute\_only} and \texttt{no\_tool} families provide tool-optional and tool-free controls (Table~\ref{tab:tgb-families}).
The registry contains seven chain tools and eight distractors: UnitConvert, TempConvert, CurrencyConvert, DurationCalc, Solver, GoogleSearch, Summarize, and Barcode. Distractors are executable; on an unsuitable input, each applies its own transformation and returns a result in its usual format. Larger registries add synthetic distractors with seeded names such as \texttt{TaxCalc} and \texttt{PayrollForecast}, each returning a plausible number derived from the scene. Accuracy measures whether the final answer matches the gold value.

\begin{table}[H]
\centering\footnotesize
\caption{TGB families and their tool chains.}
\label{tab:tgb-families}
\fitwidth{%
\begin{tabular}{@{}lll@{}}
\toprule
Family & Chain & Required computation \\
\midrule
table\_total & TableQuery $\to$ Calculator & sum of six quantity$\times$price products \\
table\_filter & TableQuery $\to$ Calculator & the same sum over a filtered subset \\
sensor\_mean & SensorAPI $\to$ Calculator & mean of five readings \\
sensor\_convert & SensorAPI $\to$ Calculator & mean, then a unit conversion \\
schedule\_gap & CalendarAPI $\to$ Calculator & minutes between two clock times \\
doc\_two\_facts & DocRetrieve $\to$ Calculator & two retrieved prices combined \\
translate\_fact & DocRetrieve $\to$ Translate & a figure written in cipher words \\
fx\_settle & DocRetrieve $+$ ExchangeRate $\to$ Calculator & price $\times$ quantity $\times$ rate \\
compute\_only & Calculator (optional) & tool-optional control \\
no\_tool & none & tool-free control \\
\bottomrule
\end{tabular}}
\end{table}

\textbf{BFCL.}
The primary BFCL evaluation uses the 200 records of BFCL v4 \texttt{multi\_turn\_base} \citep{bfcl}, grouped by primary API class: GorillaFileSystem, TradingBot, TravelAPI, and VehicleControlAPI, with 50 records per class. A record succeeds only when the required calls across all turns reach the reference end state. We therefore use documentation demotion rather than tool removal: all tools remain callable, including those needed only in a later turn. The cross-model and budget diagnostics use a separate 101-record tune set, with fitting and evaluation on the same records.

\textbf{GTA-Atomic.}
GTA-Atomic, the atomic-task split of GTA \citep{gta}, contains 229 image-based requests served by a ReAct agent through the official harness and an executable tool server. Four annotated numeric task types provide exactly scoreable requests: \texttt{read\_arith} (65), \texttt{web\_fact} (35), \texttt{count\_arith} (30), and \texttt{math\_eq} (17). The main experiments (Table~\ref{tab:gta-combined}) use the three types \texttt{read\_arith}, \texttt{count\_arith} and \texttt{web\_fact}; \texttt{math\_eq} is not used. The Keep-all menu lists the 14 tools of the tool server, and the schema-token counts are taken over these 14. Fitting uses the seven candidates whose outputs can be cached for each request: OCR, ImageDescription, Calculator, CountGivenObject, RegionAttributeDescription, TextToBbox, and Solver. Accuracy is exact match between the final answer and the reference.
Task assignments come from benchmark metadata: TGB families, BFCL API classes in the primary experiments, and annotated GTA-Atomic types.

\textbf{Baselines.}
\label{app:conventions}
Unless otherwise stated, task-level baselines construct one tool
space per task using its fitting requests and reuse that space
during evaluation.
The two groups in Tables 1–3 are separated by what selection consults: the search-based group ranks tools by matching requests to tool descriptions, whereas the LLM-based group runs the serving model during selection. In the second group, the LLM router and Jev select per request; TTO and Beam Search execute candidate spaces on the fitting requests, select once per task, and reuse the selected space for evaluation.
Under pruning, the selected tools form the callable menu.
Under documentation demotion, all tools remain callable, but only
the selected tools retain their full documentation.
Budget-matched baselines use the same task-specific budget
$K$ as \ours{}.
Gold answers or gold tool annotations are used only by the
explicitly labeled references and accuracy-based search methods
described below; evaluation labels are not used for selection.
\\
\vspace{-4mm}
\begin{itemize}[leftmargin=1.4em,itemsep=3pt,topsep=4pt,parsep=0pt]
    \item \textbf{Keep all.}
    Exposes the full benchmark-specific tool registry with
    complete documentation.

    \item \textbf{No tools.}
    Exposes an empty callable tool set.

    \item \textbf{Random.}
    Selects a uniformly sampled tool subset with the same
    size as the \ours{} space.
    The subset is reused across evaluation requests of the
    same task.
    Experiments using a fixed budget or repeated random draws
    specify these settings separately.

    \item \textbf{Jev.}
    An external relevance model (TypeSafe System One, \texttt{jev-1.13.0})
    reads each evaluation request and the tool descriptions and answers
    one yes/no question per tool with an independent probability; the
    $K$ highest-scoring tools serve that request. It selects per
    request, as the LLM router does, and reads no trace. Jev is a
    commercial closed model accessed through an API; we pin the model
    version, cache every call by the hash of its input, and release the
    per-tool scores and the resulting spaces, so the evaluation can be
    rerun without API access. It receives the same request text and tool
    descriptions as the other description-based baselines.

    \item \textbf{Oracle.}
    Selects tools using gold tool annotations.
    This baseline serves as a labeled reference rather than
    a deployable selection method; its performance is not
    assumed to be an upper bound.

    \item \textbf{BM25.}
    Scores tools by lexical matching between requests and
    tool names, descriptions, and schemas, without observing
    tool outputs.
    Experiments using per-request retrieval are identified
    separately.

    \item \textbf{Dense.}
    Uses BGE embeddings to compare request text with tool
    descriptions.
    Tools are ranked by their embedding similarity
    to the task's fitting requests, and the top-$K$
    tools are retained.
    This baseline does not observe execution traces or
    tool outputs.

    \item \textbf{LLM router.}
    Prompts the serving model to select tools from their
    descriptions once per request, before observing tool
    outputs.
    The selected menu is used for that request.
    Unlike task-level baselines, the router repeats selection
    for each evaluation request.

    \item \textbf{Tool2Vec}~\citep{tool2vec}.
    Our TGB implementation uses Qwen2.5-7B-Instruct to generate
    20 queries per tool from its name and description,
    embeds them with e5-base-v2, and averages their embeddings
    into a tool vector.
    A separate request-level variant with a fixed four-tool
    budget reaches 0.359 accuracy on Qwen2.5-7B.

    \item \textbf{Trace judge.}
    Gives an LLM judge the executed trace and generated answer
    and asks which tool outputs the answer depends on.
    The resulting judgments are aggregated over fitting
    requests to rank tools, and the highest-ranked tools
    are retained at the comparison budget.
    This baseline observes tool outputs, like \ours{}, but
    attributes their contribution through explicit LLM
    judgments rather than likelihood differences.
    Gold answers are not provided to the judge.

    \item \textbf{TTO.}
    Searches for a task-specific tool subset using
    fitting-set accuracy as the selection objective.
    Candidate spaces are evaluated through additional
    execution on fitting requests, and the selected space
    is reused during evaluation.
    This is a label-using baseline: it requires reference
    answers to evaluate candidate spaces.

    \item \textbf{Beam Search.}
    Searches over tool subsets while retaining multiple
    candidate spaces at each search step.
    Candidates are ranked by fitting-set accuracy, and the
    selected space is reused during evaluation.
    Like TTO, this baseline uses fitting labels and additional
    execution to compare candidate spaces. The search starts from the
    empty space and stops when the best candidate of the next size does
    not strictly improve fitting accuracy. On TGB chain families every
    single tool scores zero on the fitting requests, because the answer
    needs two or three tools together, so no one-tool space improves on
    the empty space and the search stops there.
\end{itemize}

The labeled search baselines evaluate candidate menus on fitting accuracy. 
\emph{TTO} adds the tool giving the largest improvement while accuracy strictly increases; 
\emph{beam search} retains five candidate sets per expansion. These methods are marked $\dagger$. 
The TGB inclusion comparator evaluates each one-tool addition to an empty menu and retains tools with a positive accuracy change.
We use ``label-free'' only for operations with no correctness-label input: this includes own-output scoring.

\subsection{Computing Tool Scores}
\label{app:subspaces}
\label{app:score-details}
\label{app:info}
\label{app:scoring-impl}
\label{sec:span_lots}
\label{sec:span_lots_setup}
\label{app:implementation}

For a recorded output $y_x$, \ours{} compares its likelihood under the original scoring context $h_x$ and a context $h_x^{-s}$ in which tool $s$'s evidence is removed:
\begin{equation}
I_x(s)=\mathcal L_\theta(y_x\mid h_x)-\mathcal L_\theta(y_x\mid h_x^{-s}),
\qquad
\mathcal L_\theta(y\mid h)=\frac{1}{|y|}\sum_{j=1}^{|y|}\log p_\theta(y_j\mid h,y_{<j}).
\label{eq:app-score}
\end{equation}
Both terms teacher-force the same output tokens; the model does not generate a replacement answer. The scored tokens are located within the full tokenized sequence, since separately tokenizing the answer can change its boundary. Prompts use the model's chat template with thinking disabled. Tool documentation remains unchanged during scoring. The intervention on the recorded evidence depends on the benchmark's execution format.

On \textbf{TGB}, the context lists one line per served tool in a fixed canonical order, formatted as \texttt{<Tool>: <output>} and truncated to 1,500 characters per line. Each tool runs once per request. The family's chain executes in dependency order, and other served tools run with generic arguments. Execution is deterministic given the request and menu. Removing $s$ re-executes the chain without it: its line disappears, and a downstream tool that requires its output receives an unresolved argument and returns an error such as \texttt{Calculator: NameError}. Outputs unrelated to $s$ remain unchanged. An upstream tool's score therefore includes the effects propagated through its downstream dependencies. The whole frozen answer is scored.

On \textbf{BFCL}, each record is scored turn by turn from its inference log. The scoring sequence contains the system prompt and query, with the original menu, followed by the turn's recorded steps. The last assistant message of the turn is the frozen target; it is never removed, including when it is itself a call to the candidate tool, so both contexts score the same string. An intervention removes, from the preceding steps, the assistant messages calling the candidate tool and their observations. If an assistant message calls several tools, the implementation removes that whole message and all observations of the step. Scores therefore reflect the removal of the recorded step in these cases. In the fitting traces of the five BFCL models, 24.9\% of tool-calling assistant messages call two or more distinct tools (5.7\% for Phi-4-mini to 30.9\% for Qwen3-4B), and 44.1\% of the (record, called tool) pairs involve at least one such message (12.3\% to 54.0\%), so up to that share of BFCL scores also carries the contribution of a co-called tool. Repeated calls to the candidate tool are removed together. Turn-level differences are averaged within a record; a record with no scorable turn contributes nothing to the aggregate.

On \textbf{GTA-Atomic}, candidate outputs are computed independently, cached once per request, and truncated to 400 characters. An intervention deletes only the candidate's line. The four-type fits score the first number-like expression in the model's final full-menu message, or its last line if no number is present; only empty final messages are excluded. For \texttt{read\_arith}, this yields 13 scored fitting requests for four models, 12 for two models, and 7 for phi-4. The primary 33/32 fit scores the same string. The answer boundary is located using the longest common token prefix of the context alone and the context followed by the answer. The separate 65-request diagnostic scores numeric, percentage, and currency expressions, excluding outputs without such expressions.

The score measures dependence of the recorded output on the intervened evidence, rather than correctness. For fixed contexts, the unnormalized sequence log-ratio $J(y)=\log p_\theta(y\mid h)-\log p_\theta(y\mid h^{-s})$ obeys
\begin{equation}
\mathbb E_{y\sim p_\theta(\cdot\mid h)}[J(y)]
=\mathrm{KL}\!\left(p_\theta(\cdot\mid h)\middle\|p_\theta(\cdot\mid h^{-s})\right).
\label{eq:kl}
\end{equation}
This identity applies to model-sampled sequences and unnormalized log probabilities. \ours{} evaluates a length-normalized difference on a greedily generated output; it is not an unbiased estimate of this KL divergence and need not be nonnegative. Appendix~\ref{app:wrong-answers} examines how incorrect outputs affect the resulting space.

\subsection{Constructing and Updating Tool Spaces}
\label{app:aggregation}
\label{app:update-details}
\label{app:no-backfill}

\textbf{From request scores to a task ranking.}
A request contributes a score $I_x(s)$ only for the tools that were exposed on it and could be scored.
The task score $\bar I_t(s)$ is the mean of these scores over $Q_t(s)$, the task's requests that provide evidence for $s$.
A request on which $s$ was not exposed is left out of this mean; it is not counted as a zero.
Tools are ranked by $\bar I_t(s)$, with ties broken by tool name so that the ranking is deterministic.
The budget $K$ is chosen once per task on its fitting requests: the top-$K$ space is served for every candidate value, and the smallest value with the highest fitting accuracy is kept.
On TGB the candidates are $K=1,\ldots,15$, so the full menu is one of them.
Appendix~\ref{app:budget} lists the chosen budgets.

\textbf{Spaces smaller than the budget.}
Only tools with evidence enter the ranking.
When fewer than $K$ tools have evidence, the space contains exactly those tools, $|E_t|=\min(K,|A_t|)$.
The remaining slots stay empty: a tool that was never scored is never added to reach the budget.
This is why the reported number of tools kept can be below $K$, most visibly in the first batches of a task served under a serving cap.

\textbf{Updating a space across batches.}
Each task keeps a running score $\tilde I_t(s)$ for every tool that has been scored at least once.
After a batch, the batch mean $\bar I_t^{(k)}(s)$ updates the running score by an exponential moving average,
\begin{equation}
\tilde I_t(s)\leftarrow\alpha\,\bar I_t^{(k)}(s)+(1-\alpha)\,\tilde I_t(s),
\label{eq:app-ema}
\end{equation}
and the first batch that scores a tool initializes $\tilde I_t(s)$ to its batch mean.
A tool that the batch did not expose keeps its running score unchanged; the score does not decay.
The next space is the top-$K$ of the running scores over all tools scored so far, so a tool that left the space can return without a change of rule.
The first batch of a task is served under the registry, and later batches under the current space; the running scores of the tools in the space are updated from their traces.
When the registry cannot be served in full (a serving cap $C$, Appendix~\ref{app:within}), the first batch is served under a random sample of $C$ registry tools and each later request is explored with probability $\varepsilon$: it is served under the current space together with a random sample of other registry tools, up to $C$ in total, which is the only way a tool outside the space obtains evidence.
The task-arrival stream of Figure~\ref{fig:stream} uses batches of four requests, $\varepsilon=0.25$, and $\alpha=0.3$.
The cold-start runs of Figure~\ref{fig:updates} use a cap of 16 tools, $\varepsilon=0.5$, and $\alpha=0.3$.

\textbf{Stored state.}
The persistent state of a task is its running scores and the resulting list of tool names; no model parameter or prompt is stored.
A deployment that has served $t$ tasks therefore holds $t$ such lists, and serving a request only reads the list of its task.
Task assignments come from benchmark metadata in our experiments (Appendix~\ref{app:datasets}); without metadata, requests are clustered into tasks, and a request that matches no existing task opens a new space.

\subsection{Scoring Details}
\label{app:tablenotes}
Accuracy is the fraction of evaluation requests answered correctly, with BFCL scored over complete records. 
Tools kept counts callable tools under pruning and fully documented tools under demotion. 
Prompt tokens include the history and documentation processed at every agent turn; 
BFCL sums turns within each record. Schema tokens count only the tool-definition block. Selection cost and break-even counts are defined in Appendix~\ref{app:efficiency}.
The statistical unit is a request or record. Repeated greedy runs measure operational variability, not additional independent samples. Stream bands show ranges across the reported seeds. A 40-request evaluation set changes by 2.5 percentage points per answer; repeated serving of an unchanged space sometimes changes one answer.

\section{Additional Task-Performance and Transfer Results}
\label{app:results}
\label{app:fixed-results}

\subsection{Cross-Model Reuse on BFCL}
\label{app:transfer}
\label{app:bfcl}

Table~\ref{tab:bfcl-xfer-matrix} gives the complete transfer matrix behind Figure~\ref{fig:xfer-matrix}: ten execution models, eleven source models, $K=16$ fully documented tools, three seeds.

\begin{table}[H]
\centering\scriptsize
\setlength{\tabcolsep}{3pt}
\caption{Cross-model configuration reuse on BFCL, $K=16$. Rows are execution models; columns are source models. Underlining marks the own-model configuration and bold the best configuration in each row.}
\label{tab:bfcl-xfer-matrix}
\fitwidth{%
\begin{tabular}{l@{\hskip 4pt}c@{\hskip 6pt}ccccccccccc}
\toprule
Deploy $\downarrow$ & Full & 2.5-3B & 2.5-7B & 2.5-14B & 3-4B & 3-8B & 3.5-2B & 3.5-4B & 3.5-9B & Phi-m & g-12B & g-31B \\
\midrule
Qwen2.5-3B & 4.0 & \underline{9.6} & 11.9 & 11.9 & 9.6 & 9.9 & 7.6 & 12.9 & 12.2 & 11.2 & \textbf{14.8} & 8.6 \\
Qwen2.5-7B & 16.2 & \textbf{18.5} & \underline{15.5} & \textbf{18.5} & 16.5 & 17.5 & \textbf{18.5} & 17.8 & 14.8 & 13.9 & 17.8 & 18.1 \\
Qwen2.5-14B & 18.8 & 18.8 & 19.8 & \textbf{\underline{20.5}} & 15.5 & 14.5 & 17.2 & 15.2 & 13.9 & 17.5 & 18.8 & 14.8 \\
Qwen3-4B & 22.1 & 28.1 & 27.1 & 25.7 & \underline{23.1} & 25.1 & 21.4 & 21.8 & 25.4 & 25.7 & \textbf{30.0} & 23.1 \\
Qwen3-8B & 35.3 & 43.6 & \textbf{44.2} & \textbf{44.2} & 37.3 & \underline{38.6} & 37.0 & 35.3 & 39.9 & 36.3 & 40.9 & 38.0 \\
Qwen3.5-2B & 5.0 & 0.0 & 1.0 & 1.3 & 0.0 & 0.0 & \textbf{\underline{1.6}} & 0.0 & 0.0 & 1.3 & 0.0 & 1.0 \\
Qwen3.5-4B & 9.6 & 13.9 & 14.8 & 9.2 & 13.9 & 12.5 & 11.9 & \underline{14.8} & 14.2 & \textbf{16.2} & 15.8 & 12.5 \\
Qwen3.5-9B & 23.4 & 30.4 & 33.3 & 31.0 & 32.7 & 33.7 & 31.4 & 31.0 & \textbf{\underline{39.3}} & 28.1 & 35.6 & 32.3 \\
Phi-4-mini & 5.0 & 7.9 & 5.6 & 5.0 & 7.3 & 5.9 & 5.6 & 5.0 & 5.9 & \underline{4.3} & \textbf{8.6} & 6.6 \\
gemma-4-12B & 30.0 & \textbf{37.0} & 36.6 & 31.7 & 27.4 & 29.7 & 32.3 & 30.0 & 31.7 & 30.7 & \underline{25.4} & 30.0 \\
\bottomrule
\end{tabular}}
\end{table}

\textbf{Reuse under the primary-class protocol.}
Table~\ref{tab:bfcl-xfer16} repeats the test under the primary-class protocol of Section~\ref{sec:experimental-setup}: each source ranks a class's tools from its own 10 fitting records, each execution model documents the top $K=16$, and accuracy is read on the 160 evaluation records over three seeds. Eight of the nine source--execution pairs improve on Keep all by 1.3 to 6.1 points and the ninth matches it.

\begin{table}[H]
\centering\footnotesize
\caption{Cross-model reuse on BFCL with independent fitting and evaluation records: accuracy (\%) on the 160 evaluation records at a fixed $K=16$, mean over three seeds (Gemma4-12B as execution model: two seeds). Rows are execution models; columns are the source of the ranking.}
\label{tab:bfcl-xfer16}
\begin{tabular}{@{}lcccc@{}}
\toprule
execution model & from Qwen3-4B & from Qwen3.5-9B & from Gemma4-12B & Keep all \\
\midrule
Qwen3-4B & \textbf{21.5} & 18.1 & 19.4 & 18.1 \\
Qwen3.5-9B & 27.7 & 26.5 & \textbf{29.4} & 23.3 \\
Gemma4-12B & 26.9 & \textbf{29.1} & 27.5 & 25.0 \\
\bottomrule
\end{tabular}
\end{table}

\subsection{Tool Budgets and Documentation}
\label{app:budget}
\vspace{-4mm}
Table~\ref{tab:kselect} lists the fitting accuracies that select the budget $K$ on Qwen2.5-7B (selected columns of the $K=1,\ldots,15$ sweep). Compact chains prefer compact spaces (\texttt{fx\_settle}: 0.72 at $K=3$, 0.16 at $K=15$), while both sensor families select large budgets.

\vspace{-4mm}
\begin{table}[H]
\centering\scriptsize\setlength{\tabcolsep}{3pt}
\caption{TGB budget selection on Qwen2.5-7B. Entries are accuracies on each family's 80 fitting requests. The sweep runs over $K=1,\ldots,15$ and the table shows selected columns with the chosen $K$; at most four retained tools are listed.}
\label{tab:kselect}
\fitwidth{%
\begin{tabular}{@{}lcccccccccl@{}}
\toprule
Family & $K=1$ & $K=2$ & $K=3$ & $K=4$ & $K=6$ & $K=8$ & $K=12$ & $K=15$ & Selected $K$ & Tools at selected $K$ \\
\midrule
compute only & 0.94 & 0.85 & 0.82 & 0.80 & 0.80 & 0.82 & 0.81 & 0.78 & 1 & Calculator \\
doc two facts & 0.01 & 0.99 & 0.99 & 0.97 & 0.97 & 0.96 & 0.96 & 0.99 & 2 & DocRetrieve, Calculator \\
fx settle & 0.01 & 0.00 & 0.72 & 0.53 & 0.23 & 0.15 & 0.10 & 0.16 & 3 & DocRetrieve, ExchangeRate, Calculator \\
no tool & 0.99 & 0.99 & 0.99 & 0.99 & 0.99 & 0.99 & 0.99 & 0.99 & 1 & Calculator \\
schedule gap & 0.00 & 0.23 & 0.28 & 0.45 & 0.64 & 0.69 & 0.56 & 0.59 & 9 & CalendarAPI, Calculator, GoogleSearch, Barcode \ldots \\
sensor convert & 0.00 & 0.80 & 0.61 & 0.49 & 0.65 & 0.65 & 0.76 & 0.79 & 14 & SensorAPI, Calculator, GoogleSearch, UnitConvert \ldots \\
sensor mean & 0.00 & 0.34 & 0.36 & 0.69 & 0.54 & 0.65 & 0.55 & 0.56 & 14 & SensorAPI, Calculator, Barcode, UnitConvert \ldots \\
table filter & 0.00 & 0.88 & 0.75 & 0.80 & 0.79 & 0.76 & 0.84 & 0.84 & 2 & TableQuery, Calculator \\
table total & 0.00 & 0.85 & 0.74 & 0.59 & 0.69 & 0.69 & 0.60 & 0.60 & 2 & TableQuery, Calculator \\
translate fact & 0.00 & 0.89 & 0.90 & 0.88 & 0.82 & 0.82 & 0.88 & 0.88 & 3 & DocRetrieve, Translate, UnitConvert \\
\bottomrule
\end{tabular}}
\end{table}

Figure~\ref{fig:bfcl-ksweep} varies the documentation budget on the 101 BFCL tune records. The best budget depends on the source and execution model: Qwen3.5-9B reaches 39.3\% at $K=16$ and 26.1\% at $K=31$, Qwen2.5-7B 15.8\% and 19.8\%.

\begin{figure}[!htb]
\centering
\includegraphics[width=\linewidth]{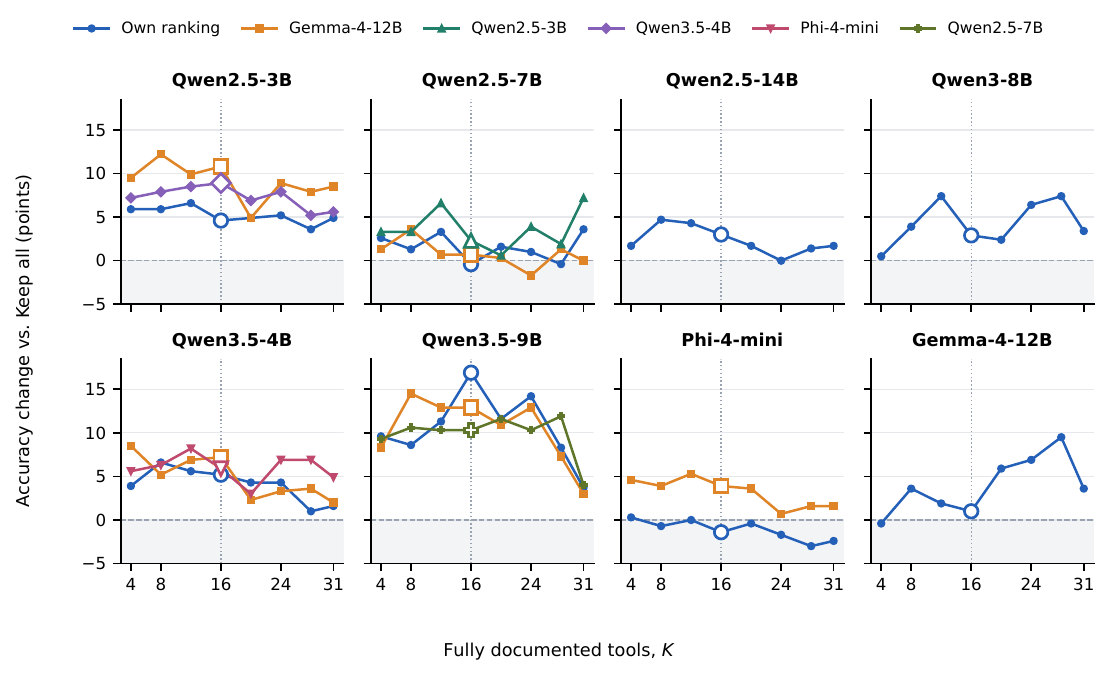}
\caption{BFCL documentation-budget sensitivity on the 101 tune records: accuracy change over Keep all, in percentage points, as the number of fully documented tools $K$ varies.
Each panel is one deployed model; colors give the source of the ranking.
Hollow markers mark $K=16$; at $K=31$ all documentation is kept and only the menu order changes.
Results average three seeds, two for Qwen3-8B.}
\label{fig:bfcl-ksweep}
\end{figure}

Table~\ref{tab:gta-desc} compares pruning with documentation demotion on GTA-Atomic. On Qwen2.5-7B, demotion reaches 18.75\% on the 32-request split against 12.50\% for pruning; pruning cuts the schema from 2,147 to 186 tokens and raises Qwen2.5-7B's total tool calls from 117 to 173.

\begin{table}[H]
\centering\footnotesize
\caption{GTA-Atomic pruning and documentation variants, with one serving session per model. Accuracy (\%) is reported on the 32-request \texttt{read\_arith} split and all 229 requests. Demotion keeps descriptions only for OCR and Calculator. Schema tokens count one tool-definition block; calls are totals over 229 requests.}
\label{tab:gta-desc}
\fitwidth{%
\begin{tabular}{lrrrrr}
\toprule
Menu & Tools & Schema tokens & 32 requests & All 229 & Tool calls \\
\midrule
\multicolumn{6}{l}{\emph{Qwen2.5-7B}} \\
Keep all & 14 & 2,147 & 0.00 & 12.6 & 117 \\
No descriptions & 14 & 1,545 & 12.50 & 12.9 & 123 \\
Demote (\ours{} ranking) & 14 & 1,587 & \textbf{18.75} & 15.7 & 143 \\
\ours{} pruning & 2.0 & 186 & 12.50 & \textbf{16.9} & 173 \\
LLM as router & 1.2 & 128 & 0.00 & 11.3 & 117 \\
\midrule
\multicolumn{6}{l}{\emph{Qwen2.5-14B}} \\
Keep all & 14 & 2,147 & 9.38 & 13.7 & 152 \\
No descriptions & 14 & 1,545 & 6.25 & 11.2 & 194 \\
Demote (\ours{} ranking) & 14 & 1,587 & 6.25 & 14.0 & 166 \\
\ours{} pruning & 2.0 & 186 & 9.38 & 14.9 & 163 \\
LLM as router & 1.6 & 159 & 3.12 & 12.7 & 118 \\
\bottomrule
\end{tabular}}
\end{table}

\FloatBarrier

\subsection{Ranking Quality at a Fixed Budget}
\label{app:fixedk}

Every ranking is deployed at the same $K\in\{2,4\}$ per family on the 320 evaluation requests, which separates the ranking from the label-based choice of the budget. \ours{} uses the ranking behind the main table; Tool2Vec and Dense rank tools by similarity to the family's 80 fitting requests; Random averages five subsets per family.

\begin{figure}[H]
\centering
\includegraphics[width=\linewidth]{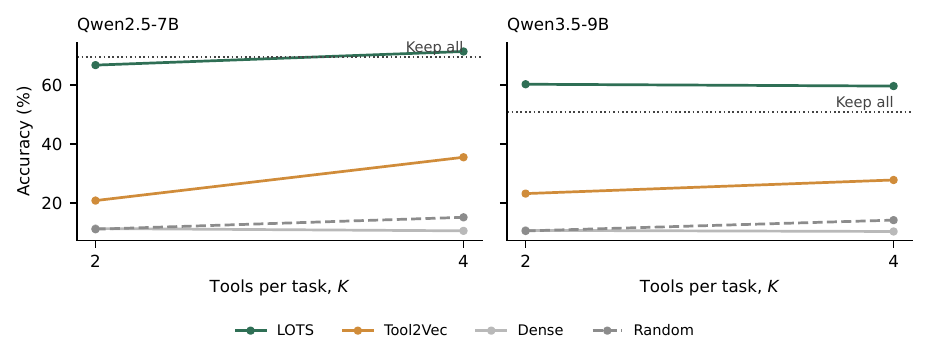}
\caption{TGB accuracy against the budget $K$, every ranking deployed at the same $K$ per family. Macro average over the ten families; Random is the mean over five subsets per family; the dotted line is Keep all.}
\label{fig:tgb-fixedk}
\end{figure}
\vspace{-4mm}

\begin{table}[H]
\centering\small\setlength{\tabcolsep}{4pt}
\caption{Paired accuracy difference, \ours{} minus the baseline, at each fixed budget, with a 95\% interval from a family-stratified paired bootstrap over evaluation requests (2,000 resamples). Positive values favor \ours{}.}
\label{tab:tgb-fixedk}
\begin{tabular}{@{}lcc@{}}
\toprule
Baseline & $K=2$ & $K=4$ \\
\midrule
\multicolumn{3}{@{}l}{\textbf{Qwen2.5-7B}} \\
Tool2Vec & $+46.0$ [$+44.9$, $+47.1$] & $+35.9$ [$+34.6$, $+37.3$] \\
Dense & $+55.5$ [$+54.3$, $+56.8$] & $+60.9$ [$+59.5$, $+62.4$] \\
Random & $+55.7$ [$+54.5$, $+57.0$] & $+56.3$ [$+55.0$, $+57.7$] \\
\midrule
\multicolumn{3}{@{}l}{\textbf{Qwen3.5-9B}} \\
Tool2Vec & $+37.2$ [$+36.2$, $+38.2$] & $+31.9$ [$+30.6$, $+33.2$] \\
Dense & $+49.7$ [$+48.5$, $+50.8$] & $+49.4$ [$+48.1$, $+50.7$] \\
Random & $+49.9$ [$+48.7$, $+51.0$] & $+45.6$ [$+44.2$, $+46.9$] \\
\bottomrule
\end{tabular}
\end{table}

At $K=2$ and $K=4$, \ours{} is 32 to 61 points above Tool2Vec, Dense and Random on both executors, and every interval excludes zero. The fitted budgets of the main table lie in this range: seven of the ten Qwen2.5-7B families have $K\le4$. At a fixed budget no fitting label enters space construction.

\section{Tool-Space Construction and Subsequent Updates}
\label{app:evolution}
\label{app:seq-protocols}

These experiments follow tool spaces from their initial construction to later updates. 
 (0) We first replace the metadata task assignment with request clustering (\S\ref{app:routing}).
 (1) We then examine whether \ours{} can construct and reuse a separate space for each task type as new types arrive.
 (2) We then keep the task distribution fixed and test whether updating an existing space improves on freezing it after the first batch.
 (3) Finally, we restrict the tools visible from the outset and test whether exploration can discover missing tools. 

\subsection{Task Routing by Request Clustering}
\label{app:routing}

The main experiments assign each request to a task with benchmark metadata: TGB families, BFCL API classes and annotated GTA-Atomic types. In deployment the assignment can come from metadata, from clustering request embeddings, or from a task router~\citep{zhang2026agentrouter,liu2026task,zhao2026tcandon}; a request that matches no existing task opens a new task space. This section replaces the metadata on TGB with unsupervised clustering of request embeddings and measures the cost of the replacement.

\textbf{Protocol.} Each request is embedded with a frozen sentence encoder, E5-base-v2 with the \texttt{query:} prefix or BGE-base-en-v1.5. The first fifth of TGB (800 requests) fits the router and the tool spaces; the remaining 3{,}200 requests evaluate them. A cluster replaces a family everywhere the main protocol uses one: contributions are aggregated within a cluster, the cluster's ranking is cut at a budget $K$ selected on its own fitting requests, and each evaluation request is served under the space of the cluster it routes to. We compare the family router (R0) with $k$-means on cosine distance at a fixed $k$ (R1), $k$-means at the $k$ that maximises the silhouette score over $k\in[2,20]$ (R2), and online threshold clustering with running-mean centroids (R3), against two controls: a single cluster that pools every request (C1) and a random partition matched to the cluster-size distribution of R1 at $k{=}10$ (C2). Family names do not occur in the request text, which we verify before clustering. Clustering quality is reported as purity, normalised mutual information and adjusted Rand index against the ten family labels, which the router never reads.

\textbf{Clustering recovers the task partition.} At $k{=}10$, $k$-means recovers the family partition exactly under both encoders, and accuracy equals that of the family router to four decimal places on both models (Table~\ref{tab:routing-clustering}). The silhouette criterion selects $k{=}10$ under BGE without being told the number of families, and $k{=}9$ under E5, which merges two families at a cost below half a point. Metadata task labels are a convenience in this setting, and an unsupervised router supplies the same partition. TGB requests are generated from family-specific templates, so this setting is favourable to clustering; the result shows that \ours{} is insensitive to replacing metadata with an unsupervised partition, not that real request streams separate as cleanly.

\textbf{The gain comes from sharing evidence within a task.} Both controls lose 8 points on Qwen2.5-7B and 8 to 9 points on Qwen3.5-9B. C2 keeps the cluster sizes and the fitted budgets of a correct partition and only reassigns which requests share a space, and it performs at the level of a single pooled cluster. The benefit therefore comes from accumulating contributions over requests that use the same tools, with the smaller menu alone accounting for none of it.

\textbf{Over-splitting a task is cheap; merging tasks is costly.} $k{=}20$ costs $0.03$ points on Qwen2.5-7B, and the threshold router at $\tau{=}0.90$ maintains 54 clusters, more than five per family, while staying within a point of the family router. Merging costs more as the budget tightens: at $k{=}5$, where purity falls to $0.500$, accuracy drops by $1.3$ points at $K$ and by $19.6$ points at a fixed $K{=}2$, since a space of two tools cannot serve two merged families at once. A practical router should therefore lean towards splitting. Thresholds do not transfer across encoders, because cosine similarities are not calibrated between them: under E5 the threshold router collapses to fewer than two clusters at $\tau{=}0.80$ and loses $5.7$ points on Qwen2.5-7B and $7.1$ on Qwen3.5-9B, while BGE keeps separate clusters at the same threshold.

\begin{table}[!tb]
\centering
\small
\setlength{\tabcolsep}{4pt}
\caption{\textbf{Routing by request clustering on TGB.} Clustering quality is measured on the 3{,}200 evaluation requests against the ten family labels; accuracy (\%) is the macro average over families under the per-cluster fitted budget $K$, mean $\pm$ standard deviation over three clustering seeds. Deterministic routers have one seed.}
\label{tab:routing-clustering}
\begin{tabular}{lcccccc}
\toprule
Router & Clusters & Purity & NMI & ARI & Qwen2.5-7B & Qwen3.5-9B \\
\midrule
R0: family label & $10$ & $1.000$ & $1.000$ & $1.000$ & $82.84$ & $65.53$ \\
R1: $k$-means, $k{=}10$, E5 & $10$ & $1.000$ & $1.000$ & $1.000$ & $82.84\pm0.00$ & $65.53\pm0.00$ \\
R1: $k$-means, $k{=}10$, BGE & $10$ & $1.000$ & $1.000$ & $1.000$ & $82.84\pm0.00$ & $65.53\pm0.00$ \\
R2: silhouette $k$, E5 ($k{=}9$) & $9$ & $0.900$ & $0.969$ & $0.898$ & $82.38$ & $65.19$ \\
R2: silhouette $k$, BGE ($k{=}10$) & $10$ & $1.000$ & $1.000$ & $1.000$ & $82.84$ & $65.53$ \\
\midrule
R1: $k$-means, $k{=}5$, E5 & $5$ & $0.500$ & $0.758$ & $0.428$ & $81.53\pm0.71$ & $62.92\pm0.78$ \\
R1: $k$-means, $k{=}20$, E5 & $20$ & $1.000$ & $0.881$ & $0.711$ & $82.81\pm0.12$ & $64.92\pm0.16$ \\
R3: threshold $\tau{=}0.90$, BGE & $54.3$ & $1.000$ & $0.869$ & $0.817$ & $82.00\pm0.27$ & $65.31\pm0.24$ \\
R3: threshold $\tau{=}0.95$, BGE & $384$ & $1.000$ & $0.624$ & $0.325$ & $78.70\pm0.06$ & $60.72\pm0.07$ \\
\midrule
C1: one cluster & $1$ & $0.100$ & $0.000$ & $0.000$ & $74.69$ & $57.72$ \\
C2: random partition & $10$ & $0.128$ & $0.006$ & $0.000$ & $74.72$ & $56.62$ \\
\bottomrule
\end{tabular}
\end{table}
\subsection{Visualization of Task Space}
\label{app:space-viz}
\vspace{-4mm}
Figures~\ref{fig:viz-config}--\ref{fig:viz-flow} show one fitted space per benchmark, the evolution of one space, and the agent's first tool call. In Figure~\ref{fig:viz-config}, the \ours{} space for TGB \texttt{fx\_settle} is exactly the chain (13.1\% to 63.1\%); on BFCL TradingBot it documents in full the twelve functions the agent calls most (40.8\% to 47.5\%); on GTA-Atomic \texttt{web\_fact} with Qwen3.5-9B it keeps six of eight tools (52.9\% to 70.6\%, Random 52.9\%). Figure~\ref{fig:viz-evolution} follows the cold-start run for \texttt{fx\_settle}: DocRetrieve enters after the second batch, ExchangeRate and Calculator after the fourth, and accuracy rises from 0\% to 67.5\% against 22.5\% for the full registry. Figure~\ref{fig:viz-flow} links the first tool call to the outcome on GTA-Atomic: under the \ours{} space the agent opens 12 of 17 requests with ImageDescription, 9 of them correctly, with 2.9 calls per request against 3.9 under the full menu.

\vspace{-4mm}
\begin{figure}[h]
\centering
\includegraphics[width=\linewidth]{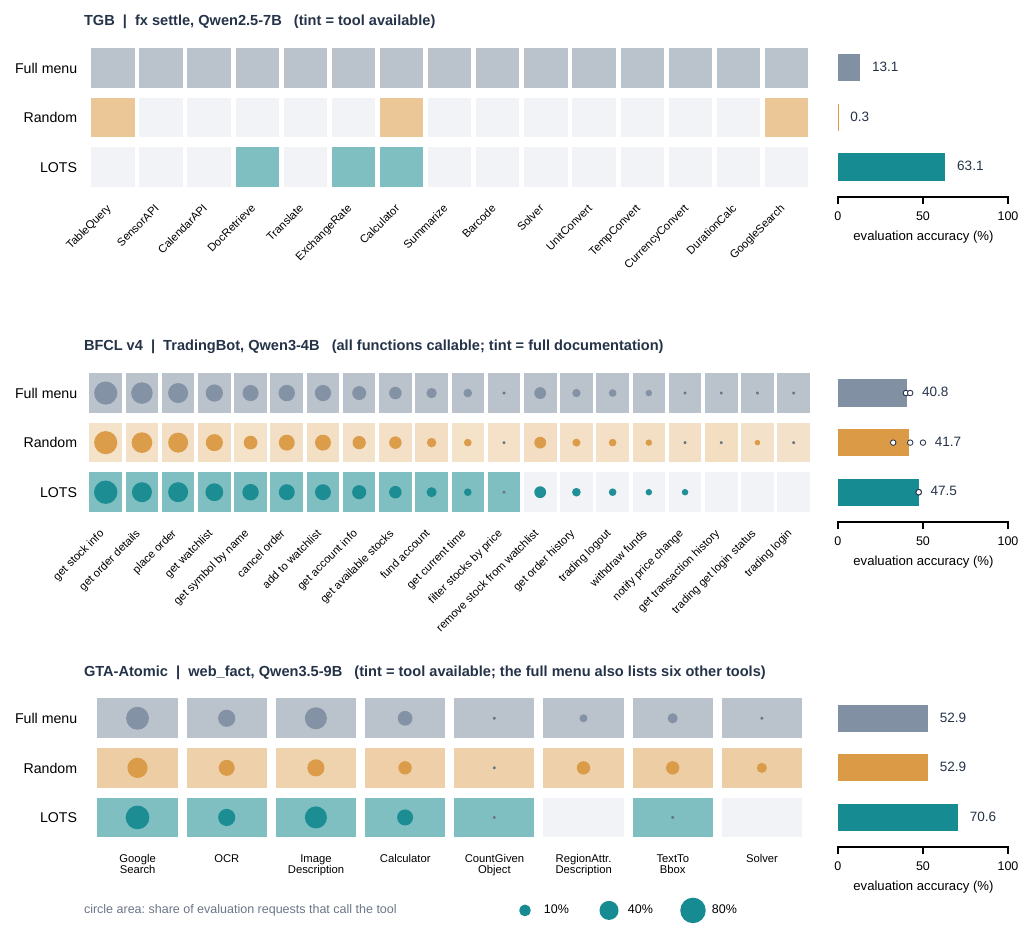}
\caption{Tool space, tool use, and performance for one task per benchmark. Tint marks the tools in the space; on BFCL, where every function stays callable, it marks full documentation, and a paler tint for Random gives the share of records in which the tool is documented. Circle area is the share of evaluation requests that call the tool. TGB executes every served tool once per request, so its panel shows the configuration only. Bars give evaluation accuracy; white dots are the three seeds on BFCL.}
\label{fig:viz-config}
\end{figure}

\begin{figure}[H]
\centering
\includegraphics[width=\linewidth]{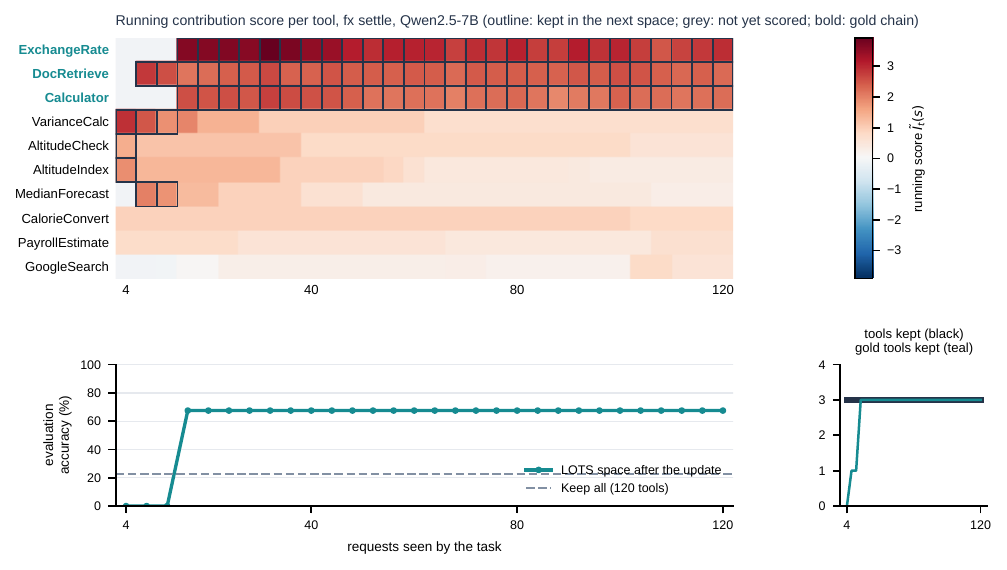}
\caption{Evolution of one persistent space: TGB \texttt{fx\_settle}, Qwen2.5-7B, 120-tool registry with a cap of 16 tools per request, batches of four requests, $\varepsilon=0.5$, $\alpha=0.3$, $K=3$. Top: running score of the ten highest-scoring tools after each batch; outlines mark the tools kept in the next space; bold labels mark the task's chain. Bottom: accuracy of the updated space on the evaluation requests, and the number of tools and chain tools it keeps.}
\label{fig:viz-evolution}
\end{figure}

\begin{figure}[H]
\centering
\includegraphics[width=\linewidth]{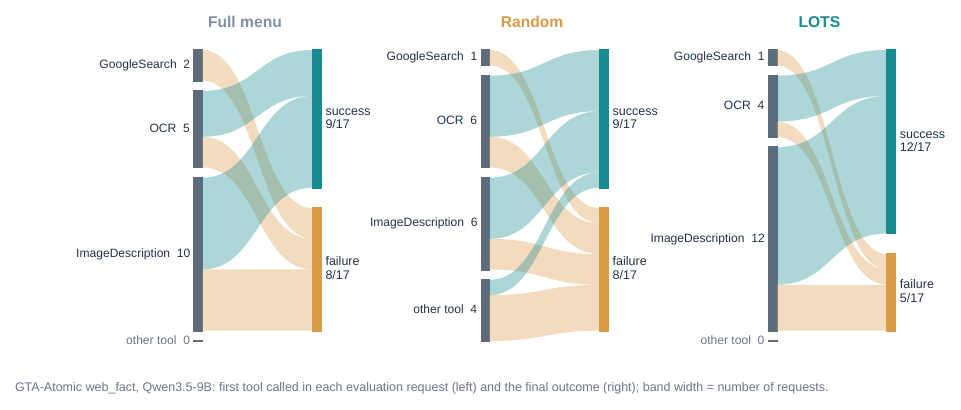}
\caption{First tool call and final outcome on GTA-Atomic \texttt{web\_fact} with the search tool enabled. Each band links the first tool the agent calls on an evaluation request to whether the request is answered correctly; band width is the number of requests. The figure describes association and omits the later calls of a request.}
\label{fig:viz-flow}
\end{figure}

\subsection{Sizes of Task Space}
\label{app:space-size}

Table~\ref{tab:space-size} reports the mean number of tools each method leaves to the agent: callable tools on TGB, fully documented tools on BFCL.

\begin{table}[H]
\centering\footnotesize\setlength{\tabcolsep}{3pt}
\caption{Mean number of tools in the deployed space. TGB counts callable tools out of a 15-tool registry; BFCL counts fully documented tools out of 79 to 105 candidates per API class. Methods marked $^\ast$ use the budget $K$ of \ours{} by construction; methods marked $^\dagger$ read correctness labels on the fitting requests; methods marked $^\ddagger$ select per request, and their count is averaged over the evaluation requests.}
\label{tab:space-size}
\fitwidth{%
\begin{tabular}{@{}lcccccccc@{}}
\toprule
\multicolumn{9}{@{}l}{\textbf{TGB}} \\
 & Qwen2.5-7B & Qwen2.5-14B & Llama3.1-8B & Mistral-7B & Qwen3.5-9B & Qwen3-8B & phi-4 & G4-12B \\
\midrule
Keep all & 15 & 15 & 15 & 15 & 15 & 15 & 15 & 15 \\
BM25$^\ast$ & 5.1 & 3.6 & 6.8 & 3.9 & 5.0 & 2.8 & 3.9 & 4.4 \\
Dense$^\ast$ & 5.1 & 3.6 & 6.8 & 3.9 & 5.0 & 2.8 & 3.9 & 4.4 \\
Tool2Vec$^\ast$ & 5.1 & 3.6 & 6.8 & 3.9 & 5.0 & 2.8 & 3.9 & 4.4 \\
LLM as router$^\ddagger$ & 1.6 & 2.7 & 2.5 & 2.6 & 1.8 & 2.2 & 2.0 & 2.1 \\
TTO$^\dagger$ & 12.0 & 15.0 & 3.0 & 1.0 & 2.0 & 4.0 & 12.0 & 15.0 \\
\ours{} & 5.1 & 3.6 & 6.8 & 3.9 & 5.0 & 2.8 & 3.9 & 4.4 \\
\midrule
\multicolumn{9}{@{}l}{\textbf{BFCL}} \\
 & Qwen3-4B & Qwen3.5-4B & Qwen3.5-9B & Gemma4-12B & Phi-4-mini & & & \\
\midrule
Keep all & all & all & all & all & all & & & \\
BM25$^\ast$ & 19.8 & 14.8 & 24.2 & 13.8 & 8.0 & & & \\
LLM as router$^\ddagger$ & 4.7 & 4.7 & 5.0 & 5.5 & 6.6 & & & \\
TTO$^\dagger$ & 4.3 & 5.0 & 9.0 & 3.5 & 0.8 & & & \\
Tool2Vec & 16 & 16 & 16 & 16 & 16 & & & \\
\ours{} & 19.8 & 14.8 & 24.2 & 13.8 & 8.0 & & & \\
\bottomrule
\end{tabular}}
\end{table}

\ours{} keeps 2.8 to 6.8 of the 15 TGB tools (mean 4.4) and documents 8.0 to 24.2 tools per BFCL class; the router keeps 1.6 to 2.7 on TGB and 4.7 to 6.6 on BFCL. TTO ranges from one tool on Mistral-7B to the full registry on Qwen2.5-14B and Gemma4-12B, and Beam Search on Gemma4-12B deploys the empty space for eight of ten families.

\subsection{Constructing Spaces as New Task Types Arrive}

Figure~\ref{fig:stream} uses 120 tools, the 15 authored tools and 105 distractors. Eight families arrive at intervals of 15 steps (which eight depends on the model; for Qwen2.5-7B they are the seven of Figure~\ref{fig:stream} and \texttt{schedule\_gap}); each batch has four requests, a family's first batch uses the full registry, and later batches use its current space with $\varepsilon=0.25$, EMA weight $\alpha=0.3$ and top-$K$ retention with the budget from its 120-tool fit. Table~\ref{tab:stream} extends the main-text example to seven models and three seeds: \ours{} finishes above Keep all in all 21 runs, with serving prompt tokens about one third of Keep all (scoring cost excluded).

\begin{table}[H]
\centering\footnotesize
\caption{Task-arrival streams on seven models and three seeds. Accuracy is credited over arrived families at the end of the stream. Each entry is \ours{} / Keep all. Prompt tokens are averaged over served requests, including exploration and excluding update scoring.}
\label{tab:stream}
\begin{tabular}{lccccc}
\toprule
Model & Families & Seed 0 & Seed 1 & Seed 2 & Prompt tokens \\
\midrule
Qwen2.5-7B & 8 & 0.82 / 0.52 & 0.82 / 0.43 & 0.79 / 0.46 & 641 / 1832 \\
Qwen2.5-14B & 8 & 0.87 / 0.76 & 0.86 / 0.80 & 0.85 / 0.82 & 623 / 1820 \\
Qwen3-8B & 8 & 0.42 / 0.21 & 0.53 / 0.23 & 0.37 / 0.20 & 629 / 1834 \\
Qwen3.5-9B & 8 & 0.62 / 0.30 & 0.59 / 0.32 & 0.61 / 0.29 & 659 / 1860 \\
phi-4 & 8 & 0.77 / 0.48 & 0.77 / 0.53 & 0.74 / 0.49 & 531 / 1537 \\
Mistral-7B & 8 & 0.62 / 0.20 & 0.50 / 0.30 & 0.64 / 0.30 & 706 / 2026 \\
G4-12B & 8 & 0.84 / 0.77 & 0.87 / 0.70 & 0.87 / 0.73 & 637 / 1768 \\
\bottomrule
\end{tabular}
\end{table}

Figure~\ref{fig:whatevolves} shows how request-level evidence forms a ranking and then a space on BFCL and GTA-Atomic. Figure~\ref{fig:tasktool} shows the TGB spaces under the top-$K$ protocol: all contain their chain, with budgets from one tool for the two controls to fourteen for the sensor families.

\begin{figure}[H]
\centering
\includegraphics[width=\linewidth]{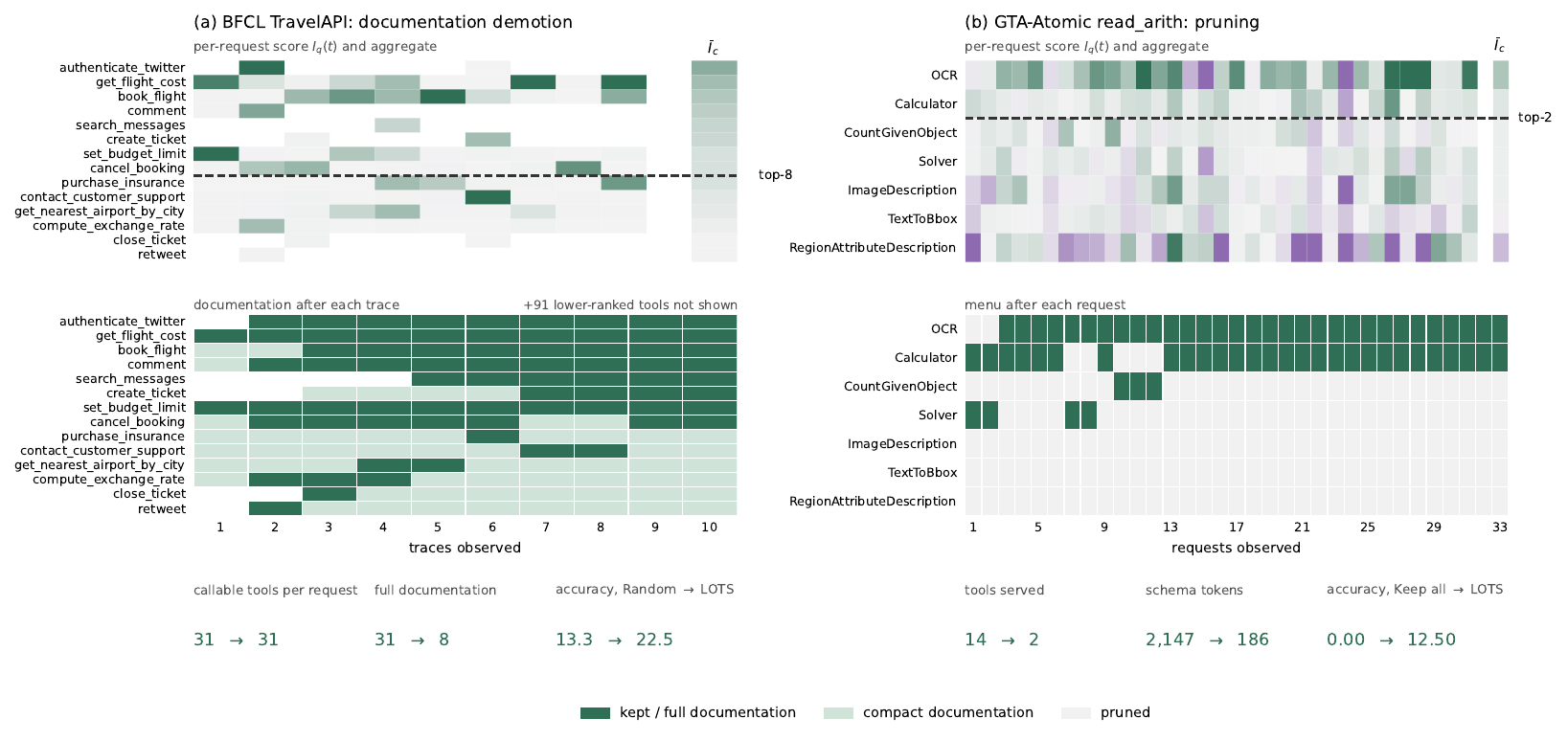}
\caption{From request scores to a task ranking and tool space, Qwen2.5-7B. (a) BFCL \texttt{TravelAPI}, using documentation demotion. (b) GTA-Atomic \texttt{read\_arith}, using pruning, fitted on 33 requests; the accuracies are those of Table~\ref{tab:gta-desc} on the remaining 32.}
\label{fig:whatevolves}
\end{figure}

\begin{figure}[!htb]
\centering
\includegraphics[width=0.9\linewidth]{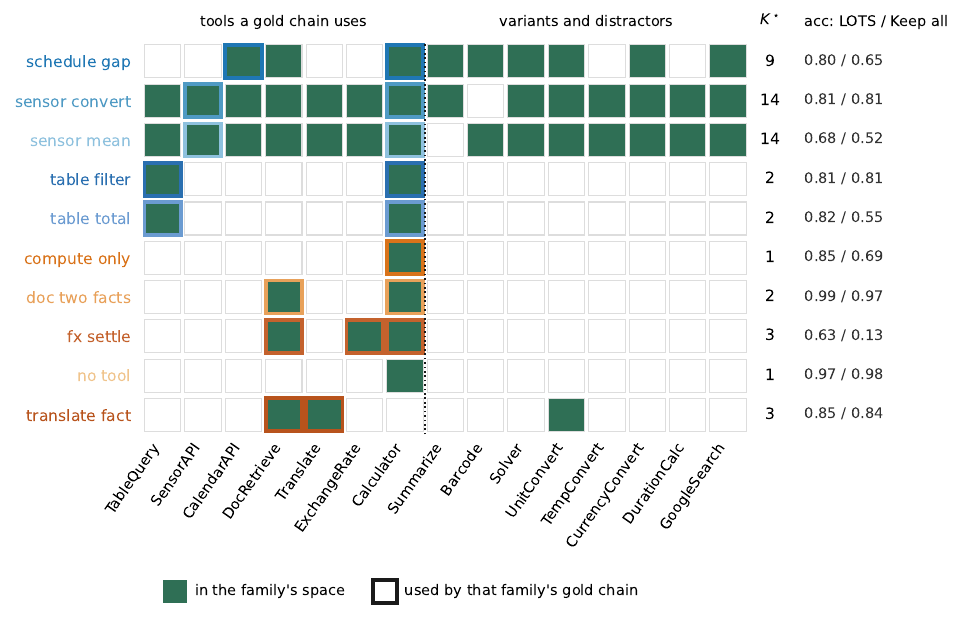}
\caption{Task-specific tool spaces on TGB, Qwen2.5-7B, under the main protocol: each row is fitted on a family's first 80 requests and keeps its top-$K$ tools. Outlined cells mark the family's gold chain; the right margin reports $K$ and accuracy against Keep all on the remaining 320 requests.}
\label{fig:tasktool}
\end{figure}

\subsection{Do Further Updates Improve an Existing Space?}
\label{app:within}
\label{app:trajectories}

We fix the task and request distribution and compare an updated space with one frozen after its first batch. In Figure~\ref{fig:taskloop-all}, each TGB family runs eight batches of 40 requests with Qwen2.5-7B and the 15-tool registry; three arms share the request order and evaluation set: Keep all, a space frozen after the first update, and the \ours{} update applied after every batch (support-frequency retention, EMA $\alpha=0.2$, threshold 5\%). The first update carries most of the gain: the updated spaces reach 0.752 after the first batch and 0.757 after the eighth, against 0.750 for the frozen arm and 0.705 for Keep all, and prompt lengths stay at 56--205 tokens per request against 200--345 under the full menu. Two of the ten spaces change after the first update; the others confirm their configuration.

\begin{figure}[!htb]
\centering
\includegraphics[width=\linewidth]{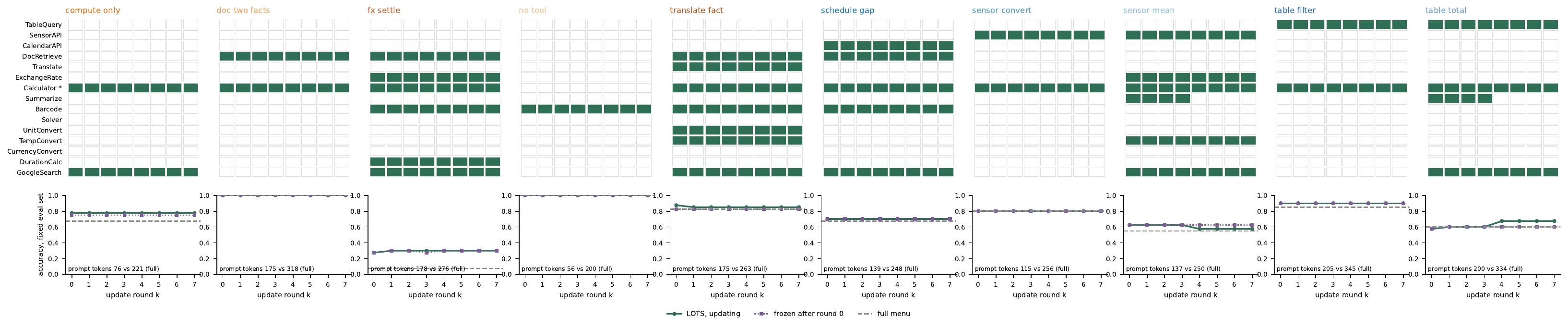}
\caption{Stationary-task update trajectories, Qwen2.5-7B: 15-tool registry, eight batches of 40 requests. Top: tools retained by the updating arm after each round; outlines mark entries. Bottom: accuracy on each family's fixed evaluation set for the updating, frozen and Keep-all configurations.}
\label{fig:taskloop-all}
\end{figure}

\textbf{Update rules under a serving cap.}
When the registry cannot be served in full, the first batch sees only a random sample of it and later requests are explored with probability $\varepsilon$ (Appendix~\ref{app:aggregation}). Table~\ref{tab:explore-ctrl} compares three update rules in this setting under the same request order, exploration probability and seeds: the \ours{} update, a size-matched random draw from the tools with evidence so far, and the union of all such tools. Each rule is a separate run, so the evidence differs once the spaces differ. \ours{} reaches 0.27 with four tools that hold 41\% of the required tools, against 0.01 for the random space and 0.02 for the union; the union's Gold of 0.29 follows from the cap, since its space already holds about 53 tools after the first round, so exploration adds nothing further, whereas the four-tool \ours{} space keeps discovering tools.

\begin{table}[H]
\centering\footnotesize
\caption{Update rules under a serving cap, Qwen2.5-7B: eight families, 120-tool registry, 30 rounds of four requests, $\varepsilon=0.5$, three seeds; each rule is a separate run with the same request order, exploration probability and seeds. An exploring request receives the current space plus at most $16-|S|$ further tools, and a space larger than 16 tools is served in full. Accuracy is read on each family's fixed evaluation set under the current space, averaged over the rounds and at the final round; size is the final space; gold is the share of the family's required tools in the final space.}
\label{tab:explore-ctrl}
\begin{tabular}{lccccc}
\toprule
Update rule & Runs & Mean acc. & Final acc. & Size & Gold \\
\midrule
\ours{} update (top-$K$) & 24 & 0.162 & 0.265 & 4.0 & 0.41 \\
Random, same $K$ & 24 & 0.010 & 0.006 & 4.0 & 0.00 \\
Union of exposed tools & 24 & 0.023 & 0.023 & 52.7 & 0.29 \\
\bottomrule
\end{tabular}
\end{table}

\FloatBarrier
\input{appendix_worked_examples}

\section{Analysis of Tool Scoring}
\label{app:diagnostics}

The following diagnostics examine the evidence used by the score, the attribution rule, and the stability of the aggregate ranking. They use different scopes to isolate these choices: request-level selection, pooled configurations across TGB families, and the task-specific spaces of the main experiments. Request-level thresholding retains tools above a tuned threshold $\tau^\star$; support-frequency retention uses the rule defined in Appendix~\ref{app:within}, with an explicitly reported exception for Llama-3.1-8B. Results from different scopes are not treated as matched comparisons.

\subsection{Top-$K$ Selection as Surrogate Maximization}
\label{app:theory}

\ours{} measures each tool's contribution with the other tools present,
then selects a task-specific space from the resulting scores.
This section restates Theorem~1 of Section~\ref{sec:lots-update} with its
setting and proof: the selection exactly optimizes an additive
approximation to the fixed-output likelihood objective.
Bounded tool interactions then give an explicit bound on the
approximation error and the resulting selection gap.
Consider one task and a fixed collection of requests $Q$ served
under a common tool menu $S$, with $m=|S|$.
For each request $x$, keep its recorded output $y_x$ fixed.
Let $h_x(A)$ denote the scoring context obtained by retaining
the evidence associated with $A\subseteq S$, using a fixed
intervention rule and leaving tool documentation unchanged.
Define
\begin{equation}
F(A)
=
\frac{1}{|Q|}
\sum_{x\in Q}
\mathcal{L}_\theta(y_x\mid h_x(A)).
\label{eq:theory-objective}
\end{equation}
Thus, $F$ measures support for the recorded outputs, rather than
the accuracy of newly generated answers.
The task-averaged \ours{} score is
\begin{equation}
I(s)=F(S)-F(S\setminus\{s\}).
\label{eq:theory-loo}
\end{equation}

These scores define the additive approximation
\begin{equation}
\widehat F(A)
=
F(S)-\sum_{s\in S\setminus A}I(s).
\label{eq:theory-surrogate}
\end{equation}
It agrees exactly with $F$ at the full menu and at every
single-tool deletion.
Since
$\widehat F(A)=F(S)-\sum_{s\in S}I(s)+\sum_{s\in A}I(s)$,
retaining the $K$ highest-scoring tools exactly maximizes
$\widehat F(A)$ subject to $|A|=K$.
The approximation arises only when several tools are removed
together: their joint effect need not equal the sum of their
individual leave-one-out effects.

\textbf{Tool interactions.}
Following the discrete-difference formulation of set-function
interactions \citep{sundararajan2020shapley}, define
\begin{equation}
\Delta_{s,t}F(B)
=
F(B\cup\{s,t\})
-F(B\cup\{s\})
-F(B\cup\{t\})
+F(B),
\label{eq:theory-interaction}
\end{equation}
for distinct $s,t\notin B$.
This quantity measures how the presence of one tool changes
the marginal contribution of the other.
Positive values indicate complementarity in the likelihood
objective; negative values indicate substitutability.
The following result bounds the error of the \ours{} approximation
using these interactions.

\textbf{Theorem 1} (selection under bounded interactions)\textbf{.}
Fix a budget $K$ and let $r=m-K$.
Suppose that, for some $\beta\geq 0$,
\begin{equation}
|\Delta_{s,t}F(B)|\leq\beta
\label{eq:theory-assumption}
\end{equation}
for every $B\subseteq S$ with $K\leq |B|\leq m-2$
and every distinct $s,t\in S\setminus B$.
Then every set $A$ of size $K$ satisfies
\begin{equation}
|F(A)-\widehat F(A)|
\leq
\binom{r}{2}\beta.
\label{eq:theory-error}
\end{equation}
Let $A_{\mathrm{LOTS}}$ contain the $K$ highest-scoring tools,
and let $A^\star\in\arg\max_{|A|=K}F(A)$.
Then
\begin{equation}
0
\leq
F(A^\star)-F(A_{\mathrm{LOTS}})
\leq
r(r-1)\beta.
\label{eq:theory-gap}
\end{equation}

\textbf{Proof.}
For any $A$ with $|A|=K$, order the removed tools as
$S\setminus A=\{d_1,\ldots,d_r\}$.
Set $S_0=S$ and $S_i=S_{i-1}\setminus\{d_i\}$.
The likelihood loss at deletion $i$ is
\[
\ell_i=F(S_{i-1})-F(S_i).
\]
Its difference from the full-context score $I(d_i)$
comes from the preceding $i-1$ deletions.
Each preceding deletion changes this marginal by a
second-order difference of the form
\eqref{eq:theory-interaction}.
All corresponding contexts have sizes between $K$ and $m-2$,
so the assumption gives
\[
|\ell_i-I(d_i)|\leq(i-1)\beta.
\]
Telescoping the deletion losses yields
\[
F(A)=F(S)-\sum_{i=1}^{r}\ell_i.
\]
Consequently,
\[
|F(A)-\widehat F(A)|
\leq
\sum_{i=1}^{r}|\ell_i-I(d_i)|
\leq
\sum_{i=1}^{r}(i-1)\beta
=
\binom{r}{2}\beta.
\]
Write $\delta=\binom{r}{2}\beta$.
Because $A_{\mathrm{LOTS}}$ maximizes $\widehat F$
among sets of size $K$,
\[
F(A^\star)
\leq
\widehat F(A^\star)+\delta
\leq
\widehat F(A_{\mathrm{LOTS}})+\delta
\leq
F(A_{\mathrm{LOTS}})+2\delta,
\]
which proves the selection bound.
For $r=0$ or $r=1$, the approximation is exact.
\hfill$\square$

The bound requires that a tool's marginal contribution not change sharply as other tools are removed; each interaction in \eqref{eq:theory-interaction} costs four fixed-output evaluations, so sampled pairs can test it. The analysis covers one common-menu batch and the fixed-output objective, not running scores across menus or execution accuracy.

\subsection{Scoring Targets and Attribution}
\label{sec:exp-why}
\label{app:tgb}
\label{app:span-ablation}

Table~\ref{tab:tgb-ablation} tests the scoring target on Qwen2.5-7B with request-level selection on TGB: scoring the same own output under every intervention reaches 0.724, against 0.648 when each candidate context generates its own answer and 0.796 with gold answers, which use labels; removing the tuned threshold grows the retained set from 3.7 to 10.4 tools and lowers accuracy to 0.698.

\begin{table}[H]
\centering\footnotesize
\caption{Request-level scoring variants on TGB, Qwen2.5-7B. Gold-answer scoring uses labels; per-candidate generation scores a different string under each intervention. These configurations differ in retained size.}
\label{tab:tgb-ablation}
\begin{tabular}{lcc}
\toprule
Variant & Tools kept & Accuracy \\
\midrule
Fixed own output, LOO, $\tau^\star$ & 3.7 & 0.724 \\
Fixed own output, LOO, $\tau=0$ & 10.4 & 0.698 \\
Per-candidate generation ($L_{\mathrm{self}}$) & --- & 0.648 \\
Gold-answer scoring & 12.2 & 0.796 \\
\bottomrule
\end{tabular}
\end{table}

Figure~\ref{fig:score-example} illustrates the attribution on one request. On GTA-Atomic, scoring the numeric answer expressions selects Calculator+OCR in all five scoring runs (16.92\%), whereas scoring the whole output selects OCR+RegionAttributeDescription (7.69\%); 9 to 10 of the 65 outputs per run contain no numeric expression and are excluded. TGB scores its short answer and BFCL the final assistant message.

\begin{figure}[H]
\centering
\includegraphics[width=0.72\linewidth]{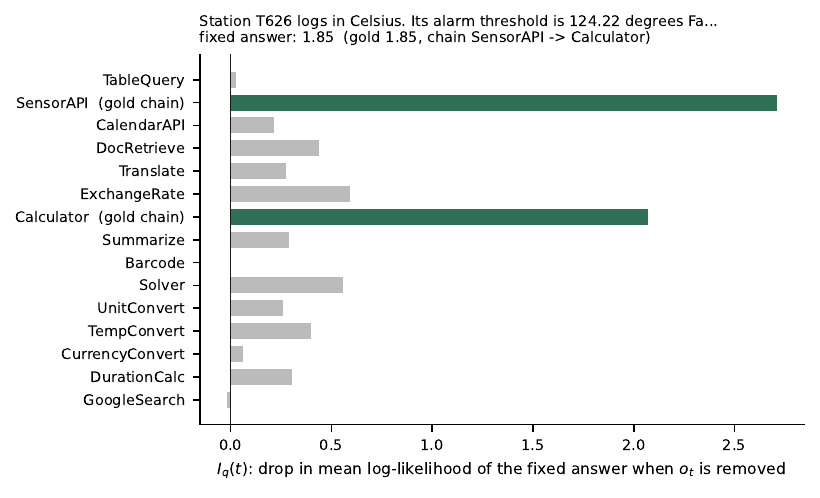}
\caption{Tool scores for one TGB \texttt{sensor\_convert} request, Qwen2.5-7B. The full-menu answer is fixed while each tool is removed using the TGB intervention, including downstream propagation where applicable. The two gold-chain tools carry the score.}
\label{fig:score-example}
\end{figure}

\textbf{Attribution in compositional tasks.}
\label{app:interactions}
In Table~\ref{tab:tgb-shape}, a document-only selector reaches 0.418 on Qwen2.5-7B and per-request BM25 at four tools 0.193 (0.182 on Qwen3.5-9B); this request-level variant is separate from the task-level, budget-matched BM25 of Table~\ref{tab:tgb-family}. Exact Shapley values over the seven chain tools correlate between likelihood and correctness valuations (Pearson 0.95, Spearman 0.96, one sign disagreement), yet retaining the positive-valued tools reaches 0.370 and 0.360 against 0.795 for the seven-tool union: averaging over coalitions penalizes a tool whose partners are often absent.

\begin{table}[H]
\centering\footnotesize
\caption{TGB coalition diagnostics over the seven chain tools. Positive Shapley values define the retained sets; the best subset is selected in hindsight. The final two rows are a separate single-tool diagnostic. Request-level \ours{} uses a different scope and is reported in Table~\ref{tab:tgb-ablation}.}
\label{tab:tgb-shape}
\fitwidth{%
\begin{tabular}{lcc}
\toprule
Diagnostic & Value & Accuracy \\
\midrule
Shapley$^{\mathrm{acc}}$ vs Shapley$^{\mathrm{lik}}$, Pearson / Spearman & $+0.95$ / $+0.96$ & \\
Sign disagreements & 1 of 7 tools & \\
Keep $\{\phi_i^{\mathrm{lik}}>0\}$ & 4 tools & 0.370 \\
Keep $\{\phi_i^{\mathrm{acc}}>0\}$ & 5 tools & 0.360 \\
Union of all chain tools & 7 tools & 0.795 \\
Best subset in hindsight & & 0.870 \\
\midrule
Single-tool score, AUROC vs one-tool success & 0.789 & \\
Best single tool deployed alone & & 0.135 \\
\bottomrule
\end{tabular}}
\end{table}

\textbf{Enumerating GTA-Atomic tool combinations.}
Table~\ref{tab:gta-combo} evaluates all 15 pairs of six local tools on the 65 \texttt{read\_arith} requests with Qwen2.5-7B. \ours{} improves over Keep all by 7.69 points (95\% interval $[-1.54,+16.92]$, $p=0.18$) and over No tools by 10.77 ($[+3.08,+20.00]$, $p=0.039$). In an independently built stack, Calculator+OCR reaches 21.54\% against 7.69\% for Keep all, 9.23\% for size-matched Random and 23.08\% for the gold oracle, and pooling the router's per-request choices over the task also yields Calculator+OCR.

\begin{table}[H]
\centering\footnotesize
\caption{GTA-Atomic combination diagnostic, Qwen2.5-7B, using the same 65 \texttt{read\_arith} requests for scoring and evaluation. Accuracy averages replicate executions. $\Delta$ is the paired difference of \ours{} answer-expression scoring against the row, with a 95\% bootstrap interval in percentage points.}
\label{tab:gta-combo}
\fitwidth{%
\begin{tabular}{@{}llccc@{}}
\toprule
Space & Tools & Replicates & Accuracy (\%) & $\Delta$ (pp) \\
\midrule
No tools & none & 5 & 6.15 & $+10.77$ $[+3.08,+20.00]$ \\
Keep all (local tools) & all six & 5 & 9.23 & $+7.69$ $[-1.54,+16.92]$ \\
No-API oracle & per-request gold & 5 & 15.38 & $+1.54$ $[0.00,+4.62]$ \\
\midrule
Calculator only & Calculator & 3 & 4.62 & \\
OCR only & OCR & 3 & 15.38 & \\
\ours{}, answer expressions & Calculator $+$ OCR & 5 & \textbf{16.92} & \\
\ours{}, whole output & OCR $+$ RegionAttributeDescription & 3 & 7.69 & \\
\ours{} top-4, answer expressions & four tools & 3 & 15.38 & \\
\midrule
Best of 15 pairs (hindsight) & OCR $+$ Solver & 3 & 23.08 & \\
Rank of the \ours{} pair & \multicolumn{4}{l}{2 of 15} \\
\bottomrule
\end{tabular}}
\end{table}

\textbf{Likelihood scoring versus an LLM trace judge.}
\label{app:trace-judge}
The judge replaces the likelihood score: the serving model answers each fitting request under the full menu, is asked which tools' outputs support its frozen answer~\citep{zheng2023judging}, tools are ranked by selection frequency, and the top-$K$ form the space. In Table~\ref{tab:trace-judge}, the judge improves over Keep all on four of five models, and likelihood scoring is higher on all five, by 0.012 on Qwen2.5-14B to 0.206 on Qwen2.5-7B.

\begin{table}[H]
\centering\footnotesize
\caption{Trace-judge ablation on TGB: the space is built from LLM judgments of the frozen traces instead of likelihood scores, with the same task-level aggregation and budget $K$.}
\label{tab:trace-judge}
\begin{tabular}{lccccc}
\toprule
 & Qwen2.5-7B & Qwen2.5-14B & Mistral-7B & Qwen3.5-9B & Qwen3-8B \\
\midrule
Keep all & 0.696 & 0.880 & 0.464 & 0.509 & 0.565 \\
Trace judge & 0.623 & 0.906 & 0.558 & 0.613 & 0.689 \\
\ours{} & \textbf{0.829} & \textbf{0.918} & \textbf{0.658} & \textbf{0.652} & \textbf{0.815} \\
\bottomrule
\end{tabular}
\end{table}

\subsection{Deletion Operation in the Leave-One-Out Score}
\label{app:deletion}

On TGB, removing $s$ re-executes the chain, so a downstream tool that needs $s$'s output returns an error. We compare this \emph{re-execute} deletion with \emph{evidence only}, which deletes just the recorded output line of $s$, on the same 80 fitting requests per family, the same traces and the same frozen answers (Qwen2.5-7B).

\begin{table}[H]
\centering\small\setlength{\tabcolsep}{4pt}
\caption{Two deletion operations on Qwen2.5-7B: chain identification on the fitting requests and evaluation accuracy of the spaces they select at fixed budgets. The mean rank is over the chain tools of the nine tool-using families (1 is best).}
\label{tab:tgb-deletion}
\fitwidth{%
\begin{tabular}{@{}lccccccc@{}}
\toprule
 & \multicolumn{3}{c}{Chain identification} & \multicolumn{3}{c}{Evaluation accuracy (\%)} \\
\cmidrule(lr){2-4}\cmidrule(lr){5-7}
Deletion & Mean rank, upstream & Mean rank, downstream & Chain in top-$|G|$ & $K=2$ & $K=4$ & $K=8$ \\
\midrule
Re-execute & 1.11 & 2.00 & 9 / 9 & 66.8 & 71.4 & 72.3 \\
Evidence only & 1.56 & 1.56 & 9 / 9 & 66.6 & 71.4 & 72.3 \\
\bottomrule
\end{tabular}}
\end{table}

Both deletions place the complete chain of every tool-using family within its top $|G|$ positions, their pooled scores correlate at 0.91, the top-4 sets agree in all ten families and the top-2 sets in nine, and the deployed accuracy is identical at $K=4$ and $K=8$ and differs by 0.2 points at $K=2$. Evidence-only deletion halves the upstream scores (DocRetrieve in \texttt{fx\_settle} 1.21 to 0.87, SensorAPI in \texttt{sensor\_mean} 1.86 to 0.55), because the recorded downstream output still carries the upstream information; the mean upstream rank moves from 1.11 to 1.56 and Calculator moves above ExchangeRate in \texttt{fx\_settle}. Re-execution credits an upstream tool with its effect through its dependents, the quantity a deployment removes when it drops the tool.

\subsection{From Likelihood Scores to Execution Utility}
\label{app:likelihood-utility}

For each of eight models and ten TGB families, the scores cached from the first 80 requests select the $K$ highest-scoring tools (Top-$K$) and one random subset of the same size, evaluated on the remaining 320 requests. Excluding \texttt{no\_tool}, seven models separate the reference-chain tools from the rest with an AUROC of 1.000 on all nine families and Gemma4-12B with 0.992, so 71 of 72 model--family pairs are separated perfectly. In Table~\ref{tab:likelihood-utility}, Top-$K$ beats Random and Keep all on every model, averaging 0.819 against 0.222 and 0.694: the ranking, not the size of the space, carries the gain. Appendix~\ref{app:fixedk} deploys the same rankings at fixed budgets.

\begin{table}[t]
\centering
\small
\setlength{\tabcolsep}{5pt}
\caption{
\textbf{Likelihood ranking and execution utility on TGB.}
Accuracy is averaged across ten task families, using 320
evaluation requests per family.
The three configurations are those of Table~\ref{tab:tgb-family}:
Top-$K$ and Random use identical family-specific budgets,
and Random uses a single fixed-seed draw.
Bold denotes the best configuration for each model.
}
\label{tab:likelihood-utility}
\begin{tabular}{lccc}
\hline
Model & Top-$K$ & Random & Keep all \\
\hline
Qwen2.5-7B  & \textbf{0.829} & 0.234 & 0.696 \\
Qwen2.5-14B & \textbf{0.918} & 0.193 & 0.880 \\
Llama3.1-8B & \textbf{0.848} & 0.413 & 0.777 \\
Mistral-7B  & \textbf{0.658} & 0.115 & 0.464 \\
Qwen3.5-9B  & \textbf{0.652} & 0.275 & 0.509 \\
Qwen3-8B    & \textbf{0.815} & 0.101 & 0.565 \\
phi-4       & \textbf{0.921} & 0.243 & 0.797 \\
Gemma4-12B  & \textbf{0.914} & 0.200 & 0.867 \\
\hline
Mean        & \textbf{0.819} & 0.222 & 0.694 \\
\hline
\end{tabular}
\end{table}

\subsection{Scoring Models, Aggregation, and Incorrect Outputs}
\label{app:scorer}

Rescoring the same instruct-model traces and frozen outputs with the base model, then serving with the instruct model, selects the same tools for Qwen2.5-14B, Mistral-7B and Qwen3-4B and gives higher accuracy for the other three models (Figure~\ref{fig:tgb-scorer}). The instruct scorer is closer to the gold chain on five of six models, and base scores are smaller on five, so fewer tools pass the fixed threshold of the support-frequency rule used here; at a fixed $K$, top-$K$ selection is invariant to a uniform rescaling of the scores.

\begin{figure}[H]
\centering
\includegraphics[width=0.55\linewidth]{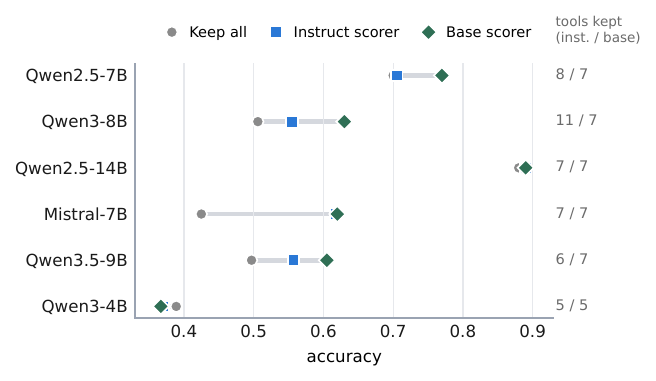}
\caption{Scoring-model diagnostic on TGB: accuracy under Keep all and under the spaces fitted with an instruct-model or a base-model scorer, from frozen instruct-model outputs and a fixed support-frequency threshold. The instruct model serves every configuration; the right margin gives the number of tools each scorer keeps.}
\label{fig:tgb-scorer}
\end{figure}

\textbf{Incorrect outputs and aggregation.}
\label{app:wrong-answers}
Figure~\ref{fig:tgb-correct} uses pooled configurations fitted from 3,000 traces per model and evaluated on 400 separate requests, with correct-only and wrong-only subsets and synthetic corruption of a random fraction of the scored answers ($1.17g+3$ for numeric gold $g$, three draws). On Qwen2.5-7B the correct-only and wrong-only rankings correlate at Spearman 0.911; support-frequency retention keeps its eight-tool space through 50\% corruption, whereas mean-score retention falls to 0.620 at 50\% and 0.102 at 75\%; Llama-3.1-8B declines from 0.765 to 0.690 at 50\% corruption. Incorrect outputs retain useful evidence, with robustness depending on the model and the retention rule; these pooled diagnostics do not cover the per-task top-$K$ protocol.

\begin{figure}[!htb]
\centering
\includegraphics[width=\linewidth]{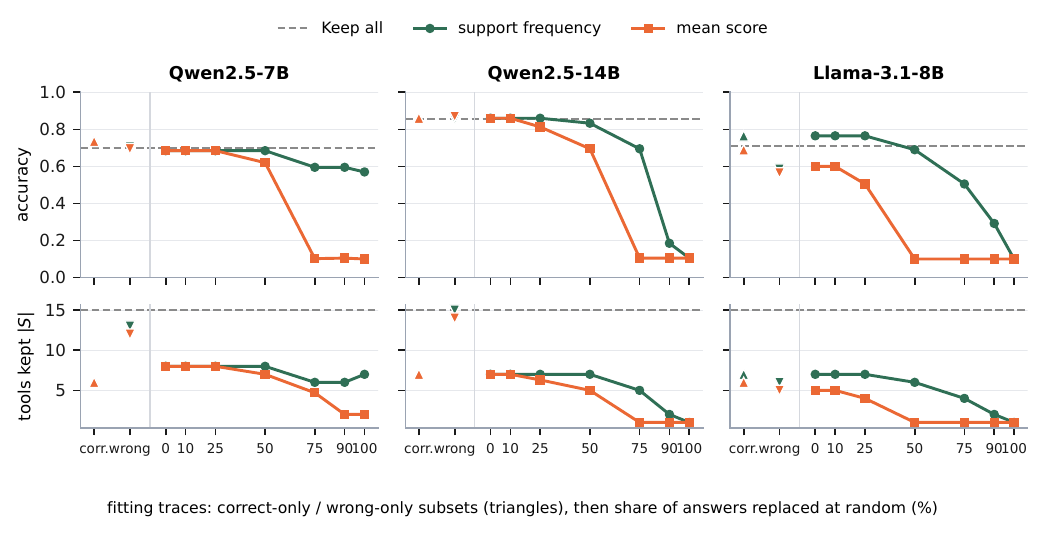}
\caption{Incorrect-output diagnostic: 3,000 fitting traces per model and 400 separate evaluation requests. Triangles give label-filtered subsets (correct only, wrong only); lines replace a growing share of answers at random (three draws). Top: accuracy; bottom: tools kept. Support-frequency retention uses a 3\% threshold for Llama-3.1-8B and 5\% otherwise; mean-score retention uses $\tau=0.1$.}
\label{fig:tgb-correct}
\end{figure}

\subsection{Sensitivity to the Number of Traces}
\label{app:traces}

Figure~\ref{fig:tgb-traces} subsamples the 3,000-trace Qwen2.5-7B pool thirty times at each size: with 100 traces the ranking reaches a Spearman correlation of 0.957 and a Jaccard overlap of 0.95 with the full-pool fit, with 400 traces 0.986 and 0.97. The main experiments fit from 80 requests per family.

\begin{figure}[H]
\centering
\includegraphics[width=0.8\linewidth]{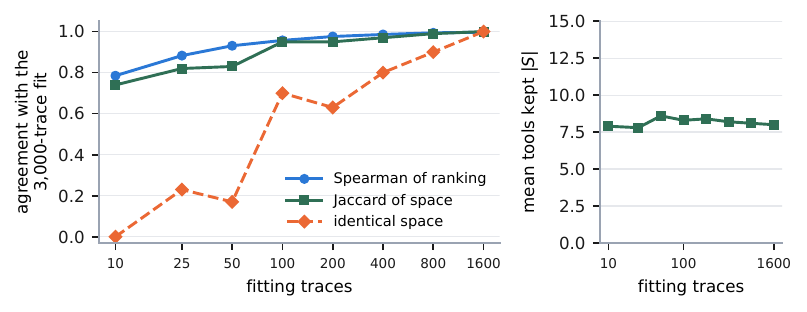}
\caption{Trace-count sensitivity, TGB and Qwen2.5-7B. Thirty subsamples per size are compared with the configuration fitted from all 3,000 traces: rank correlation, Jaccard overlap of the space, and the fraction of subsamples with an identical space (left); mean space size (right).}
\label{fig:tgb-traces}
\end{figure}

\FloatBarrier

\section{Computational Cost and Registry Scaling}
\label{app:efficiency}

A trace with $m$ interventions needs at most $m+1$ teacher-forced sequences, one with the full context and one per intervention; no replacement answer is generated. Table~\ref{tab:cost} reports selection and serving input tokens: the GTA-Atomic fit scores 33 requests over seven candidates (264 sequences) at fixed $K=2$, and the BFCL rows use the 101-record tune protocol at fixed $K=16$. A router pays its selection call on every request; a frozen space pays once. The full-menu runs that produced the fitting traces are excluded from the selection column; charging them adds about 98k prompt tokens on GTA-Atomic and 29M on BFCL. Serving tokens count every agent turn. On GTA-Atomic the 14-tool schema has 2,147 tokens and the OCR+Calculator schema 186, and Qwen2.5-7B processes 2,956 against 1,572 prompt tokens per request; the size-matched random menu uses 671 tokens but also makes fewer calls. On TGB, serving under the fitted spaces uses 143 to 181 prompt tokens per request against 251 to 319 for the full fifteen-tool menu, 33 to 46\% fewer.

\begin{table}[!htb]
\centering\scriptsize\setlength{\tabcolsep}{2.2pt}
\caption{Input-token costs of selecting and serving tool spaces. \ours{} selection counts likelihood scoring. Router selection is paid per request; other selection costs are per fit. A lower serving cost comes with lower accuracy for Router and Random: both are below \ours{} on every model of Tables~\ref{tab:tgb-family}--\ref{tab:gta-combined}. Inference counts all serving prompts for one request or BFCL record. Pays back divides reported \ours{} selection tokens by serving tokens saved against Keep all. -- denotes an inapplicable or unrun configuration. The Gemma4-12B TTO and Beam Search cells count the local search, 13 and 27 candidate spaces evaluated on the 33 fitting requests.}
\label{tab:cost}
\fitwidth{%
\begin{tabular}{@{}l@{\hspace{5pt}}cccc@{\hspace{7pt}}cccccc@{\hspace{7pt}}c@{}}
\toprule
 & \multicolumn{4}{c}{Selection} & \multicolumn{6}{c}{Inference} & \ours{} \\
\cmidrule(lr){2-5}\cmidrule(lr){6-11}
Model & Router & TTO$^\dagger$ & Beam$^\dagger$ & \ours{} & Keep all & Random & Router & TTO$^\dagger$ & Beam$^\dagger$ & \ours{} & Pays back \\
\midrule
\multicolumn{12}{@{}l}{\textbf{BFCL}: fit on 101 tune records; inference sums all turns; payback is in records} \\
Q3-4B & 1,513 & 1.72B & -- & 60.41M & 289k & 127k & 31k & 225k & -- & 174k & 526 \\
Q3.5-4B & 1,530 & 1.66B & -- & 59.88M & 243k & 146k & 47k & 202k & -- & 190k & 1,130 \\
Q3.5-9B & 1,530 & 1.24B & -- & 61.49M & 352k & 161k & 46k & 219k & -- & 238k & 539 \\
Q3-8B & 1,513 & 1.37B & -- & 55.61M & 207k & 121k & 43k & 164k & -- & 162k & 1,244 \\
G4-12B & 1,569 & 0.95B & -- & 63.39M & 198k & 95k & 33k & 125k & -- & 130k & 931 \\
Phi-4-mini & 1,490 & 0.61B & -- & 69.27M & 147k & 67k & 27k & 89k & -- & 88k & 1,179 \\
\addlinespace[3pt]
\multicolumn{12}{@{}l}{\textbf{GTA-Atomic} \texttt{read\_arith}, 33/32 diagnostic split: fit on 33 requests; inference and payback units are requests} \\
Q2.5-3B & 385 & 0.71M & 1.95M & 101k & 9,697 & 3,804 & 3,018 & 3,525 & 3,525 & 4,935 & 22 \\
Q2.5-7B & 385 & 0.27M & 0.63M & 101k & 2,956 & 671 & 550 & 988 & 1,095 & 1,572 & 77 \\
Q3-8B & 385 & 0.42M & 1.29M & 101k & 9,727 & 3,968 & 1,744 & 3,555 & 2,115 & 1,594 & 12 \\
Q2.5-14B & 385 & 0.24M & 0.59M & 101k & 3,382 & 542 & 679 & 716 & 634 & 867 & 40 \\
Q3.5-9B & 401 & 0.64M & 1.71M & 102k & 12,617 & 3,726 & 2,296 & 5,550 & 5,550 & 4,139 & 12 \\
phi-4 & 381 & 0.88M & 2.38M & 86k & 15,174 & 5,835 & 884 & 6,169 & 6,169 & 5,774 & 9 \\
G4-12B & 385 & 2.40M & 5.25M & 102k & 6,531 & 808 & 1,304 & 385 & 1,392 & 1,275 & 19 \\
\bottomrule
\end{tabular}}
\end{table}

For a frozen configuration with fitting cost $C_{\mathrm{fit}}$ and per-request serving costs $c_{\mathrm{full}}>c_{\mathrm{space}}$, the break-even count is
\begin{equation}
N_{\mathrm{break}}=\frac{C_{\mathrm{fit}}}{c_{\mathrm{full}}-c_{\mathrm{space}}}.
\label{eq:app-breakeven}
\end{equation}
Figure~\ref{fig:amortize} plots this account for GTA-Atomic; token break-even differs from latency break-even, since teacher-forced scoring, cached prefill and decoding cost different amounts per token. A deployment that keeps updating also pays for later scoring rounds, so over $R$ rounds the total is
\begin{equation}
C_{\mathrm{total}}=\sum_{k=0}^{R-1}\left(C_{\mathrm{serve},k}+C_{\mathrm{score},k}+C_{\mathrm{extra\ trace},k}\right),
\end{equation}
and Table~\ref{tab:stream} reports the serving component only.

\begin{figure}[H]
\centering
\includegraphics[width=\linewidth]{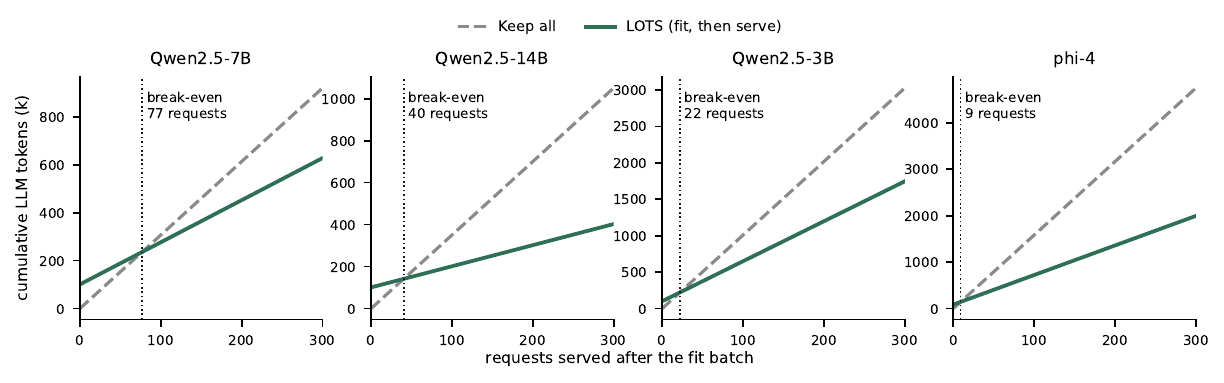}
\caption{GTA-Atomic \texttt{read\_arith} token amortization. \ours{} starts with likelihood-scoring input tokens for the 33 fitting requests and accumulates serving input tokens; Keep all starts at zero and accumulates its own serving cost. The plotted account excludes any additional cost of collecting fitting traces.}
\label{fig:amortize}
\end{figure}

\subsection{Scaling with Registry Size}
\label{app:registry}

Figure~\ref{fig:tgb-scale} appends synthetic distractors to the 15-tool registry; two rounds each serve 400 requests from all ten families, the second under one pooled configuration fitted from the first. Full-menu accuracy falls from 0.677 at 15 tools to 0.508 at 120 on Qwen2.5-7B, where the fitted 43-tool configuration reaches 0.598; Qwen3.5-9B's fitted configurations keep about seven tools and average 0.607, 0.612, 0.594, and 0.538 over four seeds at 30, 60, 120, and 240 tools, while its full menu falls from 0.446 to 0.345; Figure~\ref{fig:teaser}(c) plots these means. Scoring time grows from 176 seconds at 15 tools to 8,158 seconds at 120 on one L40S, and to 3,437 seconds under the fitted configuration.

\begin{figure}[!htb]
\centering
\includegraphics[width=\linewidth]{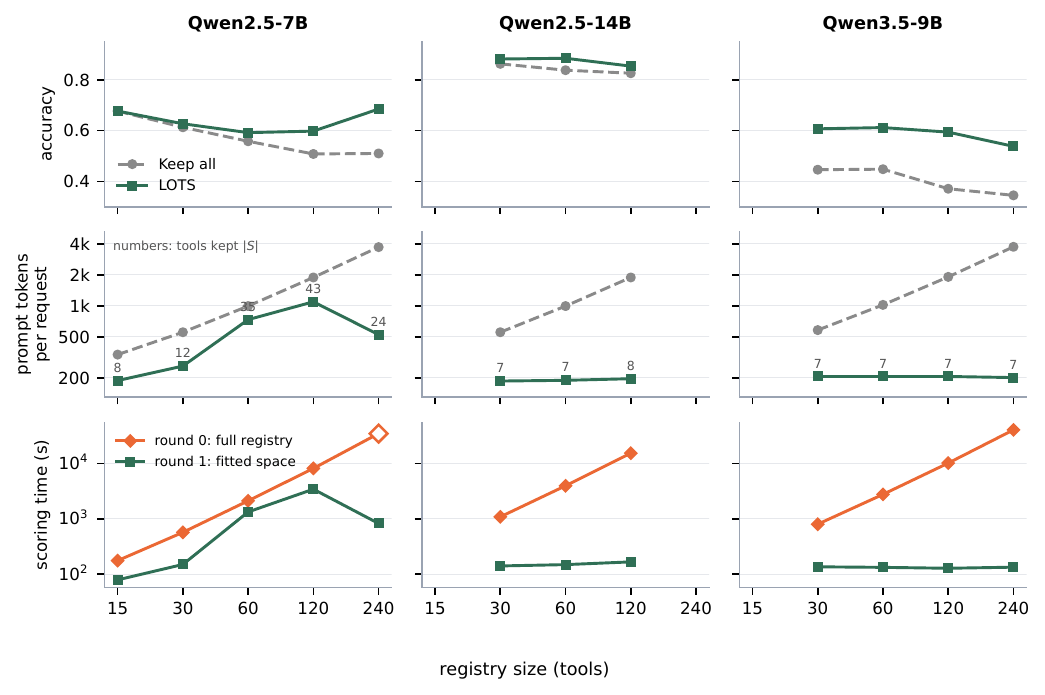}
\caption{TGB registry-size diagnostic with one pooled configuration and two rounds of 400 requests. Top: accuracy under Keep all and under the fitted space. Middle: serving prompt tokens per request, with the number of retained tools above each point. Bottom: scoring time on one L40S for round 0 (full registry) and round 1 (fitted space); this is scoring time, not generation latency. The hollow marker uses a different batch cap. Qwen2.5-7B and Qwen2.5-14B average three seeds at 30 to 120 tools and Qwen3.5-9B averages four seeds for accuracy and tokens; the other points, and all timings, use one seed.}
\label{fig:tgb-scale}
\end{figure}

\end{document}

%% file: math_commands.tex
\usepackage{amsmath,amsfonts,bm}

\def\eqref#1{equation~\ref{#1}}

\def\1{\bm{1}}

\DeclareMathAlphabet{\mathsfit}{\encodingdefault}{\sfdefault}{m}{sl}
\SetMathAlphabet{\mathsfit}{bold}{\encodingdefault}{\sfdefault}{bx}{n}

%% file: appendix_worked_examples.tex
\providecommand{\lotsline}[1]{\noindent\parbox[t]{\linewidth}{\ttfamily\scriptsize\raggedright\hangindent=1em #1\strut}\par}
\providecommand{\lotsdel}[1]{\noindent\colorbox{red!13}{\parbox[t]{\dimexpr\linewidth-2\fboxsep}{\ttfamily\scriptsize\raggedright\hangindent=1em #1\strut}}\par}
\providecommand{\lotschg}[1]{\noindent\colorbox{orange!22}{\parbox[t]{\dimexpr\linewidth-2\fboxsep}{\ttfamily\scriptsize\raggedright\hangindent=1em #1\strut}}\par}
\providecommand{\lotskey}[1]{\textcolor{black!60}{\scriptsize\sffamily\bfseries #1}}
\newtcolorbox{lotsbox}[1]{enhanced,breakable,colback=white,colframe=black!80,colbacktitle=black!80,coltitle=white,
  fonttitle=\sffamily\bfseries\footnotesize,title={#1},boxrule=0.5pt,arc=1pt,left=5pt,right=5pt,top=4pt,bottom=4pt,
  before skip=7pt,after skip=7pt,fontupper=\small}

\subsection{Worked Examples of Tool-Space Optimization}
\label{app:worked}

This section follows one task through the procedure of \S\ref{sec:method} with the recorded data of one run.
The examples explain the mechanism; the evidence for accuracy is in \S\ref{sec:exp-shared} and \S\ref{sec:exp-loop}.
{\setlength{\fboxsep}{1pt}\colorbox{red!13}{Light red}} marks evidence removed by an intervention and {\setlength{\fboxsep}{1pt}\colorbox{orange!22}{light orange}} marks content that changes because of it.

\paragraph{TGB: one task from request scores to a reused space.}
The task is \texttt{fx\_settle} inside the task-arrival stream of Figure~\ref{fig:stream} (Qwen2.5-7B, seed~0, 120-tool registry), and the request is the first request of the task's first batch; no other selection was made.
Requests, answers, scores, stored scores, spaces, and accuracies are copied from the run log; tool outputs are regenerated by the benchmark's deterministic tool code, which the log does not store.

\begin{lotsbox}{1. Task and request}
\lotskey{Task type} \texttt{fx\_settle}: a price and a quantity retrieved by DocRetrieve, a rate from ExchangeRate, combined by Calculator. \\
\lotskey{Model / run} Qwen2.5-7B-Instruct; task-arrival stream, seed 0; batches of 4 requests, exploration $\varepsilon{=}0.25$, EMA weight $\alpha{=}0.3$. \\
\lotskey{Request} \texttt{fx\_settle\_0125}, update round 0 (the task's first batch). \\
\lotskey{Space before the update} the whole registry, 120 tools (15 authored, 105 synthetic distractors). \\
\lotskey{Budget} $K{=}3$, chosen beforehand on the task's fitting requests under the 120-tool registry by accuracy (top-$K$ accuracy $K{=}1$: 0.01, $K{=}2$: 0.00, $K{=}3$: 0.72, $K{=}4$: 0.51, $K{=}6$: 0.36).\par\smallskip
\lotskey{User request}\par
\lotsline{We are settling 12 units of Vexforn-X4 in the settlement currency, and 960.00 has already been prepaid. By how much does the settlement value exceed the amount already prepaid?}
\end{lotsbox}
\begin{lotsbox}{2. Tool outputs and realized answer}
On TGB the benchmark executes every served tool once per request and lists the outputs in the prompt; these are not calls chosen by the model, and the prompt contains no tool documentation.\par\smallskip
\lotskey{Outputs of the task's tool chain}\par
\lotsline{Calculator: 971.86}
\lotsline{DocRetrieve: DocRetrieve:}
\lotsline{[1] Vexforn-X4 --- Vexforn-X4 is supplied in single units at a listed price of 63.52 per unit.}
\lotsline{[2] Mirdan-X4 --- general background, no figures given.}
\lotsline{ExchangeRate: ExchangeRate: 1 unit = 1.275 settlement units}
\smallskip\lotskey{Outputs of four of the 117 other served tools} (113 further lines omitted)\par
\lotsline{GoogleSearch: No results found.}
\lotsline{CurrencyConvert: CurrencyConvert: no monetary amount found.}
\lotsline{Summarize: Summarize: the image contains no readable figures.}
\lotsline{TorqueIndex: TorqueIndex: 79.50}
\smallskip\lotskey{Realized answer $y$}\par
\lotsline{11.86}
\smallskip{\footnotesize Evaluation note, not part of any scoring input: the reference value is 11.86, so this answer is counted correct.}
\end{lotsbox}
\begin{lotsbox}{3. Leave-one-out intervention on \texttt{DocRetrieve} (excerpt)}
Both columns end in the same target string, which is teacher-forced; only the tool outputs differ.
On TGB removing a tool re-executes the chain without it, so the Calculator line changes as well; the other 116 output lines are identical in the two contexts.\par\smallskip
\noindent\begin{minipage}[t]{0.485\linewidth}\lotskey{Context $h_x$}\par
\lotsline{Answer the question with a short final answer only.}
\lotsline{\mbox{}}
\lotsline{Question: We are settling 12 units of Vexforn-X4 in the settlement currency, and 960.00 has already been prepaid. By how much does the settlement value exceed the amount already prepaid?}
\lotsline{\mbox{}}
\lotsline{Tool output:}
\lotsline{GoogleSearch: No results found.}
\lotschg{Calculator: 971.86}
\lotsdel{DocRetrieve: DocRetrieve:}
\lotsdel{[1] Vexforn-X4 --- Vexforn-X4 is supplied in single units at a listed price of 63.52 per unit.}
\lotsdel{[2] Mirdan-X4 --- general background, no figures given.}
\lotsline{ExchangeRate: ExchangeRate: 1 unit = 1.275 settlement units}
\lotsline{\mbox{}}
\lotsline{Final answer:}
\lotsline{\textbf{ 11.86}\quad$\leftarrow$ scored target $y$}
\end{minipage}\hfill
\begin{minipage}[t]{0.485\linewidth}\lotskey{Context $h_x^{-s}$, $s=$ DocRetrieve}\par
\lotsline{Answer the question with a short final answer only.}
\lotsline{\mbox{}}
\lotsline{Question: We are settling 12 units of Vexforn-X4 in the settlement currency, and 960.00 has already been prepaid. By how much does the settlement value exceed the amount already prepaid?}
\lotsline{\mbox{}}
\lotsline{Tool output:}
\lotsline{GoogleSearch: No results found.}
\lotschg{Calculator: NameError: name 'P' is not defined}
\lotsline{ExchangeRate: ExchangeRate: 1 unit = 1.275 settlement units}
\lotsline{\mbox{}}
\lotsline{Final answer:}
\lotsline{\textbf{ 11.86}\quad$\leftarrow$ same target $y$}
\end{minipage}
\end{lotsbox}
\begin{lotsbox}{4. Likelihood contributions for this request}
$\mathcal{L}_\theta$ is the mean log-likelihood of the tokens of the realized answer (here \texttt{` '}, \texttt{`1'}, \texttt{`1'}, \texttt{`.'}, \texttt{`8'}, \texttt{`6'}), located inside the full prompt; $I_x(s)=\mathcal{L}_\theta(y\mid h_x)-\mathcal{L}_\theta(y\mid h_x^{-s})$ is computed from unrounded values.\par\smallskip
\centering\footnotesize\begin{tabular}{@{}lrrrr@{}}\toprule
Tool $s$ & $\mathcal{L}_\theta(y\mid h_x)$ & $\mathcal{L}_\theta(y\mid h_x^{-s})$ & $I_x(s)$ & $I_x(s)$ in the run log \\ \midrule
DocRetrieve & -0.1316 & -2.6867 & +2.555 & +2.640 \\
ExchangeRate & -0.1316 & -2.7974 & +2.666 & +2.654 \\
Calculator & -0.1316 & -2.6290 & +2.497 & +2.554 \\
\midrule
Summarize & -0.1316 & -1.2136 & +1.082 & +1.050 \\
CurrencyConvert & -0.1316 & -0.8356 & +0.704 & +0.723 \\
GoogleSearch & -0.1316 & -0.3350 & +0.203 & +0.200 \\
TorqueIndex & -0.1316 & -1.2569 & +1.125 & +1.028 \\
\bottomrule\end{tabular}\par\smallskip\raggedright\small
The run log stores only $I_x(s)$. The two likelihood terms were recomputed for this appendix with the same prompt, answer-token positions, and normalization (HF transformers, CPU, bfloat16); the original run used vLLM on GPU, and the last column gives its value.
The three chain tools carry the score. Distractor scores are small and can be positive: the score measures how much the evidence supports the answer the model gave, whether or not that answer is correct.
\end{lotsbox}
\begin{lotsbox}{5. Aggregate over the batch and update the space}
The task's first batch has 4 requests, all served under the registry, so every tool has 4 scored requests; the table shows the whole batch for seven of the 120 tools. Two of the four realized answers are wrong, and their chain tools still score high.\par\smallskip
\centering\footnotesize\setlength{\tabcolsep}{3pt}\begin{tabular}{@{}lrrrrrrr@{}}\toprule
Request & \texttt{0125} & \texttt{0385} & \texttt{0034} & \texttt{0175} & Scored & Batch mean & Rank \\
Realized answer & {\scriptsize\texttt{11.86}} & {\scriptsize\texttt{49.26}} & {\scriptsize\texttt{22.52}} & {\scriptsize\texttt{280.21 - 280.00 = 0.21}} & requests & $\bar I_t(s)$ & of 120 \\
Counted correct & yes & no & no & yes & & & \\ \midrule
DocRetrieve & +2.640 & +3.278 & +2.175 & +0.746 & 4 & +2.210 & 3 \\
ExchangeRate & +2.654 & +2.775 & +3.136 & +0.788 & 4 & +2.338 & 1 \\
Calculator & +2.554 & +2.280 & +3.576 & +0.807 & 4 & +2.304 & 2 \\
\midrule
Summarize & +1.050 & -0.097 & +0.041 & -0.010 & 4 & +0.246 & 7 \\
TorqueIndex & +1.028 & -0.118 & +0.036 & -0.002 & 4 & +0.236 & 11 \\
CurrencyConvert & +0.723 & -0.061 & +0.167 & +0.054 & 4 & +0.221 & 23 \\
GoogleSearch & +0.200 & -0.099 & +0.046 & -0.014 & 4 & +0.033 & 120 \\
\bottomrule\end{tabular}\par\smallskip\raggedright\small
\lotskey{Update} top-$K{=}3$ of the ranking: \texttt{Calculator, DocRetrieve, ExchangeRate}. \lotskey{Space} 120 tools $\rightarrow$ 3 tools.\par\smallskip
\lotskey{Next batch} Served under the three-tool space (no request was explored). The stored score is the batch mean after the first batch and an exponential moving average afterwards, $0.3\times$ new batch mean $+\,0.7\times$ stored; a tool that the batch did not expose keeps its stored score.\par\smallskip
\centering\footnotesize\begin{tabular}{@{}lrrr@{}}\toprule Tool & Stored after batch 1 & Batch-2 mean & Stored after batch 2 \\ \midrule
ExchangeRate & 2.3385 & 3.2876 & 2.6232 \\
Calculator & 2.3041 & 2.4274 & 2.3411 \\
DocRetrieve & 2.2097 & 2.2685 & 2.2273 \\
MarginCalc (rank 4, not exposed) & 0.2655 & --- & 0.2655 \\ \bottomrule\end{tabular}\par
\end{lotsbox}
\begin{lotsbox}{6. Reuse on a subsequent request}
\lotskey{Request} \texttt{fx\_settle\_0351}, update round 1: served after the space was built, under that space (Calculator, DocRetrieve, ExchangeRate).\par
\lotsline{We are settling 14 units of Kalond-R3 in the settlement currency, and 540.00 has already been prepaid. By how much does the settlement value exceed the amount already prepaid?}
\smallskip\lotskey{Tool outputs in the prompt (complete)}\par
\lotsline{Calculator: 556.57}
\lotsline{DocRetrieve: DocRetrieve:}
\lotsline{[1] Kalond-R3 --- Kalond-R3 is supplied in single units at a listed price of 36.24 per unit.}
\lotsline{[2] Kalond-Q3 --- general background, no figures given.}
\lotsline{ExchangeRate: ExchangeRate: 1 unit = 1.097 settlement units}
\smallskip\lotskey{Realized answer}\par
\lotsline{16.57}
\smallskip{\footnotesize Evaluation note: the reference value is 16.57, so the answer is counted correct. This request was not also served under Keep all. On the task's 40 fixed evaluation requests the three-tool space scores 0.675 and Keep all 0.225; the second batch is answered 3 of 4 correctly with 143 prompt tokens per request against 1830 under the registry.}
\end{lotsbox}

\pagebreak[3]\paragraph{BFCL.}
The second example uses the per-class protocol of Table~\ref{tab:bfcl-models}: Qwen3-4B-Instruct, API class \texttt{TradingBot}, fitted on the class's 10 fitting records; the record is the class's first fitting record and the turn its first turn with a tool call before the final message.

\begin{lotsbox}{B1. Recorded turn, scoring target, and intervention (excerpt)}
\lotskey{Record} \texttt{multi\_turn\_base\_100}, turn 0. The scored target is the last assistant message of the turn; it is never removed. Removing \texttt{get\_stock\_info} deletes the assistant message that calls it and the observation that call produced; the system prompt with the tool menu is identical in both contexts and is omitted here.\par\smallskip
\noindent\begin{minipage}[t]{0.485\linewidth}\lotskey{Context $h_x$ (end of the turn)}\par
\lotsline{user: I'm contemplating enhancing my investment portfolio with some tech industry assets, and I've got my eye on Nvidia. I'm keen to know its current stock price, and would appreciate if you could source this information for me.}
\lotsdel{assistant: [get\_stock\_info(symbol="NVDA")]}
\lotsdel{tool: \{"price": 220.34, "percent\_change": 0.34, "volume": 1.234, "MA(5)": 220.45, "MA(20)": 220.67\}}
\lotsline{\textbf{assistant: The current stock price of Nvidia (NVDA) is \$220.34, with a 0.34\% increase in price today. The trading volume is 1.234 million shares, and its 5-day Moving Average (MA(5)) is \$220.45, while the 20-day Moving Average (MA(20)) is \$220.67.}}
\end{minipage}\hfill
\begin{minipage}[t]{0.485\linewidth}\lotskey{Context $h_x^{-s}$, $s=$ get\_stock\_info}\par
\lotsline{user: I'm contemplating enhancing my investment portfolio with some tech industry assets, and I've got my eye on Nvidia. I'm keen to know its current stock price, and would appreciate if you could source this information for me.}
\lotsline{\textbf{assistant: The current stock price of Nvidia (NVDA) is \$220.34, with a 0.34\% increase in price today. The trading volume is 1.234 million shares, and its 5-day Moving Average (MA(5)) is \$220.45, while the 20-day Moving Average (MA(20)) is \$220.67.}}
\end{minipage}\par\smallskip
{\footnotesize Logged score of \texttt{get\_stock\_info} on this record (mean over the record's scored turns): +0.147. The log keeps this record-level difference only; the two likelihood terms and the per-turn values were not stored.}
\end{lotsbox}
\begin{lotsbox}{B2. Full and demoted documentation of one tool}
\texttt{get\_transaction\_history} is ranked 27 of 85 for this class, below $K{=}20$, so it is served with demoted documentation (function \texttt{\_compress} of the implementation). The name, the parameter names, types and defaults, and the required list are kept; the class introduction and the per-parameter prose are removed; the first sentence of the tool's own description is kept. The \texttt{response} field is unchanged and omitted here.\par\smallskip
\noindent\begin{minipage}[t]{0.485\linewidth}\lotskey{Full documentation (265 tokens)}\par
\lotsline{"name": "get\_transaction\_history"}
\lotsdel{"description": "This tool belongs to the trading system, which allows users to trade stocks, manage their account, and view stock information.}
\lotsline{  Tool description: Get the transaction history within a specified date range."}
\lotsline{"start\_date": \{"type": "string", "default": "None"\}}
\lotsdel{    "description": "Start date for the history (format: 'YYYY-MM-DD')."}
\lotsline{"end\_date": \{"type": "string", "default": "None"\}}
\lotsdel{    "description": "End date for the history (format: 'YYYY-MM-DD'). "}
\lotsline{"required": []}
\end{minipage}\hfill
\begin{minipage}[t]{0.485\linewidth}\lotskey{Demoted documentation (202 tokens)}\par
\lotsline{"name": "get\_transaction\_history"}
\lotsline{"description": "Get the transaction history within a specified date range."}
\lotsline{"start\_date": \{"type": "string", "default": "None"\}}
\lotsline{"end\_date": \{"type": "string", "default": "None"\}}
\lotsline{"required": []}
\end{minipage}\par\smallskip
{\footnotesize Token counts are of this tool's JSON definition under the Qwen3-4B tokenizer; they are not the cost of a request.}
\end{lotsbox}
\begin{lotsbox}{B3. Resulting task configuration}
\raggedright\lotskey{Full documentation} the top $K{=}20$ of the class's 85 ranked tools: 12 tools of the trading API (\texttt{get\_order\_details, get\_watchlist, get\_stock\_info, get\_symbol\_by\_name, place\_order, add\_to\_watchlist, get\_account\_info, fund\_account, cancel\_order, get\_available\_stocks, get\_current\_time, filter\_stocks\_by\_price}) and 8 tools of other APIs that appear in the class's records.\par\smallskip
\lotskey{Demoted documentation} every other tool in a record's menu, including 8 tools of the trading API (\texttt{get\_transaction\_history, get\_order\_history, trading\_login, notify\_price\_change, trading\_get\_login\_status, remove\_stock\_from\_watchlist, withdraw\_funds, trading\_logout}).\par\smallskip
\textbf{Effective tools remain callable.} \par\smallskip
{\footnotesize On this class's 40 evaluation records (three seeds): LOTS 47.5/47.5/47.5 (mean 47.5); random ranking at the same budget 50.0/32.5/42.5 (mean 41.7); Keep all, mean of three seeds, 40.8. $K$ was chosen on the 10 fitting records.}
\end{lotsbox}